\renewcommand*{\backrefalt}[4]{%
    \ifcase #1 \footnotesize{(Not cited.)}%
    \or        \footnotesize{(Cited on page~#2.)}%
    \else      \footnotesize{(Cited on pages~#2.)}%
    \fi}
\long\def\comment#1{}
\newtheorem{theorem}{Theorem}[section]
\newtheorem{lemma}[theorem]{Lemma}
\newtheorem{assumption}[theorem]{Assumption}
\numberwithin{equation}{section}
\newcommand{\argmin}{\operatorname*{arg\,min}\limits}
\newcommand{\rr}{{\mathbb{R}}}
\newcommand{\calB}{{\mathcal{B}}}
\newcommand{\calL}{{\mathcal{L}}}
\newcommand{\EE}{{\mathbb{E}}}
\newcommand{\bb}[1]{\mathbf{#1}}
\newcommand{\bbw}[1]{\mathbf{w}_{#1}}
\newcommand{\bbg}{\mathbf{g}}
\newcommand{\bbH}{\mathbf{H}}
\newcommand{\bbQ}{\mathbf{Q}}
\newcommand{\bbJ}{\mathbf{J}}
\newcommand{\bbr}{\mathbf{r}}
\newcommand{\sigmag}{\sigma_g}
\newcommand{\ie}{{i.e.}}
\newcommand{\eg}{{e.g.}}
\newcommand{\norm}[1]{\left\lVert#1\right\rVert}
\newcommand{\sqbracket}[1]{\left[#1\right]}
\newcommand{\ubar}[1]{\underaccent{\bar}{#1}}
\begin{document}

\begin{center}

% Exact Gauss-Newton Optimization for Training Deep Neural Networks

{\bf{\LARGE{
Exact Gauss-Newton Optimization \\ [.2cm] for Training Deep Neural Networks
}}}

\vspace*{.2in}
{\large{
\begin{tabular}{c}
Mikalai Korbit$^{\dagger}$ 
\and Adeyemi D. Adeoye$^{\dagger}$ 
\and Alberto Bemporad$^{\dagger}$
\and Mario Zanon$^{\dagger}$ \\
\end{tabular}
}}

\vspace*{.2in}

\begin{tabular}{c}
DYSCO (Dynamical Systems, Control, and Optimization)$^\dagger$ \\ 
IMT School for Advanced Studies Lucca, Italy \\
\end{tabular}

\vspace*{.2in}

\today

\vspace*{.2in}

%%%%%%%%%%%%%%%%%%%%%%%%%%%%%%%%%%%%%%%%%%%%%%%
% ABSTRACT
%%%%%%%%%%%%%%%%%%%%%%%%%%%%%%%%%%%%%%%%%%%%%%%

\begin{abstract}
We present Exact Gauss-Newton (EGN), a stochastic second-order optimization algorithm that combines the generalized Gauss-Newton (GN) Hessian approximation with low-rank linear algebra to compute the descent direction. Leveraging the Duncan-Guttman matrix identity, the parameter update is obtained by factorizing a matrix which has the size of the mini-batch. This is particularly advantageous for large-scale machine learning problems where the dimension of the neural network parameter vector is several orders of magnitude larger than the batch size. Additionally, we show how improvements such as line search, adaptive regularization, and momentum can be seamlessly added to EGN to further accelerate the algorithm. Moreover, under mild assumptions, we prove that our algorithm converges in expectation to a stationary point of the objective. Finally, our numerical experiments demonstrate that EGN consistently exceeds, or at most matches the generalization performance of well-tuned SGD, Adam, GAF, SQN, and SGN optimizers across various supervised and reinforcement learning tasks.
\end{abstract}

\end{center}

%%%%%%%%%%%%%%%%%%%%%%%%%%%%%%%%%%%%%%%%%%%%%%%
% MAIN PART
%%%%%%%%%%%%%%%%%%%%%%%%%%%%%%%%%%%%%%%%%%%%%%%

\section{Introduction}

Optimization plays a pivotal role in Machine Learning (ML), 
with gradient-based methods being at the forefront. 
Stochastic Gradient Descent (SGD)~\cite{robbins1951stochastic}, a first-order 
stochastic optimization algorithm, 
and its accelerated versions such as 
momentum-based approaches \cite{nesterov1983method, sutskever2013importance, lucas2018aggregated, chen2022demon},
adaptive learning rates \cite{duchi2011adaptive, hinton2012neural, zeiler2012adadelta},
and a combination of the two \cite{kingma2014adam, dozat2016incorporating, zaheer2018adaptive},
have been instrumental in numerous ML applications.
For example, in Computer Vision (CV) ResNets~\cite{he2016deep} 
are trained with SGD,
AdaGrad~\cite{duchi2011adaptive} is used for training 
recommendation systems~\cite{naumov2019deep},
language models GPT-3~\cite{brown2020language} and 
LLaMA~\cite{touvron2023llama} are optimized 
with Adam~\cite{kingma2014adam} and AdamW~\cite{loshchilov2017decoupled}, 
respectively.
Despite their cheap and relatively easy-to-implement 
updates, first order-methods (FOMs) suffer from several  
shortcomings. FOMs are sensitive to hyper-parameter
selection, and the optimal hyper-parameter 
set typically does not transfer well across different problems
which leads to a costly procedure of hyper-parameter tuning.
Also, FOMs are slow to converge in the flat regions of the loss 
landscape, where the Hessian is ill-conditioned~\cite{sagun2017empirical}.

Second-order methods (SOMs) incorporate the (approximate)
curvature information into the update in order to  effectively
precondition the gradient vector.
In contrast to first-order algorithms, SOMs are shown 
to be robust to the selection of hyper-parameters~\cite{xu2020second}
and to potentially offer faster convergence~\cite{agarwal2017second, bollapragada2019exact}.
So far, the adoption of second-order methods for ML problems has been limited
due to the complexity of calculating and
storing the Hessian and the computational load of 
solving the linear system
$\mathbf{H}\mathbf{d}=-\mathbf{g}$, where $\mathbf{H}$ is 
the (approximate) Hessian matrix, 
$\mathbf{g}$ is the gradient of the 
loss function, and $\mathbf{d}$ is the descent direction.
Addressing these computational challenges,
most approaches use a combination of Hessian 
approximation and an efficient algorithmic technique 
for solving the linear system.
Common approximations to the Hessian include
diagonal scaling~\cite{yao2021adahessian, liu2023sophia},
the empirical Fisher matrix~\cite{ren2019efficient}, 
the quasi-Newton approach~\cite{schraudolph2007stochastic, byrd2016stochastic, berahas2016multi},
and the Gauss-Newton (GN) approximation~\cite{gargiani2020promise, tran2020stochastic, brust2021nonlinear}.

In this work, we follow the Gauss-Newton approach to Hessian approximation.
We apply a special derivation inspired by~\cite{adeoye2023score, adeoye2021sc} which uses
an efficient exact linear algebra 
identity---the Duncan-Guttman (DG) formula~\cite{duncan1944lxxviii,guttman1946enlargement}---to 
speed up the inversion of the Hessian matrix.
Compared to Hessian-free optimization 
(HFO)~\cite{martens2010deep,martens2011learning, kiros2013training} 
and Inexact Gauss-Newton (iGN)~\cite{tran2020stochastic}, this approach allows
Exact Gauss-Newton (EGN) to solve the system $\mathbf{H}\mathbf{d}=-\mathbf{g}$
\textit{exactly} with the same algorithmic complexity burden.
Moreover, compared to methods that directly apply the Sherman-Morrison-Woodbury (SMW) 
formula (see, e.g., \cite{ren2019efficient}), we solve for the descent direction in fewer matrix operations, 
thus reducing the algorithmic complexity.

Our contributions are as follows.
\begin{itemize}
    \item We propose the EGN algorithm, which 
    relies on a regularized Gauss-Newton Hessian matrix
    and
    exploits the Duncan-Guttman identity to 
    efficiently solve the linear system.

    \item We provide a theoretical analysis of the EGN algorithm 
    and establish that EGN finds a stationary point in expectation for a large enough iteration count.
    
    \item We evaluate the performance of EGN 
    on several supervised learning and
    reinforcement learning tasks 
    using various neural network architectures.    
\end{itemize}

\section{Related Work}\label{sec:related_work}

Our method can be viewed within the broader context of approximate second-order stochastic optimization.
Some notable approaches include 
diagonal scaling~\cite{yao2021adahessian, liu2023sophia},
Krylov subspace descent~\cite{vinyals2012krylov},
Hessian-free optimization~\cite{martens2010deep, martens2011learning, kiros2013training},
quasi-Newton approaches~\cite{schraudolph2007stochastic, byrd2016stochastic, berahas2016multi, wills2021stochastic, liu2022quasi}, 
Gauss-Newton ~\cite{botev2017practical, gargiani2020promise}
and Natural Gradient~\cite{amari1998natural, kunstner2019limitations} methods.     
A detailed overview of 
second-order optimization 
methods for large-scale machine learning problems 
can be found in~\cite{bottou2018optimization, sun2019survey, xu2020second}.

\begin{table*}[t]
    \centering
    \caption{A survey on Gauss-Newton methods for large-scale stochastic optimization}
    \label{tab:slm_methods_survey}
    \resizebox{\textwidth}{!}{%
        \begin{tabular}{|p{2cm}|p{4cm}|p{4cm}|p{4cm}|p{4cm}|}
            \hline
            \textbf{Algorithm} & \textbf{Jacobian Estimation} & \textbf{Solving the Linear System} & \textbf{Additional Improvements}  \\
            \hline
            SGN \cite{gargiani2020promise} & Exact via reverse mode autodiff & Approximate with CG & - \\
            \hline
            LM \cite{pooladzandi2022improving} & Exact via reverse mode autodiff  & Approximate with CG & Line search, momentum, uphill step acceptance   \\
            \hline
            SGN \cite{tran2020stochastic} & Exact via reverse mode autodiff  & Approximate with ADPG  & -  \\
            \hline
            SGN2 \cite{tran2020stochastic} & Approximate with SARAH estimators  & Approximate with ADPG & -   \\
            \hline
            NLLS1, NLLSL \cite{brust2021nonlinear} & Rank-1, Rank-L approximation & Exact with SMW formula & - \\
            \hline
            SMW-GN \cite{ren2019efficient} & Exact via reverse mode autodiff & Exact with SMW formula & Adaptive regularization  \\
            \hline
            EGN (this paper)
            & Any Jacobian estimation algorithm (exact via backpropagation is the default)
            & 
            Exact with DG identity (see Theorem \ref{thm:dg_identity} and Lemma \ref{lm:egn_direction})
            & Line search, adaptive regularization, momentum  \\
            \hline
        \end{tabular}
    }
\end{table*}

Most closely related to our work are the 
algorithms inspired by the 
Gauss-Newton approach.
Such methods approximate the Hessian of the loss function using only first-order sensitivities.
In practice, the damped version of the 
Gauss-Newton
direction is often calculated, 
forming the stochastic Levenberg-Marquardt (SLM)
group of algorithms.
We can classify SLM methods by three dimensions:
(a) by the type of the Jacobian 
estimation algorithm used;
(b) by the matrix inversion algorithm;
and (c) by additional adaptive parameters
and acceleration techniques.
Based on this paradigm
we summarize selected SLM algorithms
in Table~\ref{tab:slm_methods_survey}.

The Jacobian can either be calculated 
exactly through the reverse mode of 
automatic differentiation as proposed
by, e.g., 
\cite{gargiani2020promise, brust2021nonlinear, tran2020stochastic}
or be estimated approximately.
Low rank Jacobian estimation is suggested by
the NLLS1 and NLLSL algorithms~\cite{brust2021nonlinear} 
with experimental results 
showing almost on par performance with the 
exact Jacobian version of the methods.
SGN2~\cite{tran2020stochastic}
uses SARAH estimators for approximating
function values and Jacobians with 
SGN2 performing better than SGN~\cite{tran2020stochastic}
that assumes the exact Jacobian.
In~\cite{liu2023sophia} the 
Gauss-Newton-Bartlett (GNB) estimator is introduced
to adapt the GN method to large-scale classification problems.
EGN does not mandate a specific computation technique for 
the Jacobian. In our experiments we rely on backpropagation 
deferring other methods to further research.

Solving $\mathbf{H}\mathbf{d}=-\mathbf{g}$ naively  
for a neural network with $d$ parameters has complexity
$\mathcal{O}\left( d^3 \right)$.
The procedure becomes practically infeasible even for  
networks of moderate size, so several 
alternative approaches have been proposed.
Following~\cite{bollapragada2019exact}, we distinguish between \textit{inexact}
and \textit{exact} solutions to the linear system.
Inexact methods rely on iterative 
algorithms to solve the system approximately in as few 
iterations as possible.
Among such algorithms we mention
the Conjugate Gradient (CG) method
used in Hessian-free optimization~\cite{martens2011learning, kiros2013training}
as well as in GN methods like SGN~\cite{gargiani2020promise}
and LM~\cite{pooladzandi2022improving};
the Accelerated Dual Proximal-Gradient (ADPG) method
proposed in~\cite{tran2020stochastic}
for SGN and SGN2 solvers;
the Stochastic Gradient Iteration approach (SGI), 
analysed in~\cite{bollapragada2019exact}
as part of the Newton-SGI solver.
Exact methods are typically based on linear algebra 
identities.
For example, in \cite{ren2019efficient} and \cite{brust2021nonlinear} 
the system is solved exactly 
with the Sherman-Morrison-Woodbury (SMW) formula.
Contrarily, we follow ~\cite{adeoye2023score} and derive the EGN update formula using the Duncan-Guttman 
matrix identity~\cite{duncan1944lxxviii,guttman1946enlargement}.

State-of-the-art implementations of the GN algorithm 
often include additional improvements
to address the issues of stochasticity of the Hessian matrix,
computational load of calculating the Jacobian
and adaptive hyper-parameter tuning.
Just as with the gradient, the stochastic sampling 
introduces noise in the Hessian which leads 
to the erroneous descent direction.
A common solution to combat noisy estimates 
is to add temporal averaging (or momentum),
e.g., with exponential moving averages (EMA).
Examples of such approach are AdaHessian~\cite{yao2021adahessian} and
Sophia~\cite{liu2023sophia} that keep EMA of a diagonal Hessian,
as well as ~\cite{kiros2013training} that incorporates momentum 
into the HFO framework.
The idea of re-using the Hessian estimate from previous 
iterations is formalized in~\cite{doikov2023second} 
showing that evaluating the Hessian ``lazily'' once per $k$
iterations significantly reduces the computational burden 
while at the same time does not degrade the performance that much.
Although second-order methods typically 
require less tuning \cite{botev2017practical, gargiani2020promise, xu2020second},
the SLM approach still requires setting 
a learning rate $\alpha$ and
the regularization parameter $\lambda$.
The line search for $\alpha$, common in 
the deterministic optimization,
is problematic in the stochastic 
setting due to the high variance 
of the loss gradient norm  
\cite{curtis2020adaptive}. Still, there are promising attempts 
to incorporate line search into quasi-Newton 
methods~\cite{wills2021stochastic}, HFO~\cite{kiros2013training}
as well as Gauss-Newton~\cite{pooladzandi2022improving}.
Adaptive regularization 
in a manner similar to the 
deterministic Levenberg-Marquardt approach
is proposed 
in~\cite{hong2020stochastic, ren2019efficient}.

\section{Preliminaries}\label{sec:preliminaries}

We use boldface letters to denote vectors and matrices. The $n\times n$ identity matrix is denoted by $\bb{I}_n$, and we omit the subscript when the size is clear from the context. 
The subscript $t$ represents the iteration within
the optimization loop, 
and may be omitted to avoid overloading the notation. 
We define the standard inner product between two vectors $\bb{x}, \bb{y}$ as $\langle\bb{x},\bb{y}\rangle \coloneqq \bb{x}^\top \bb{y}$. 
The standard Euclidean norm is denoted by $\|\cdot\|$, 
and the expected value of a random variable is denoted by $\EE[\cdot]$.

We adopt the Empirical Risk Minimization (ERM)
framework~\cite{pml1Book} and 
consider the problem of finding the weights $\mathbf{w}\in \rr^{d}$ of a parametric function $\Phi:\rr^{m}\times\rr^{d}\rightarrow\rr^{c}$, 
e.g., a neural network, such that 
it minimizes the empirical risk over 
a dataset $\mathcal{D}$ consisting of $N$ pairs
$(\mathbf{y}_{i},\mathbf{x}_{i})$ where 
$\mathbf{x}_{i}\in\rr^{m}$
is a vector of features
and $\mathbf{y}_{i}\in\rr^c$, $c\ge 1$ is a target vector.
We want to solve the following optimization problem:
\begin{equation}\label{eq:problem}
\bb{w}^* \in \argmin_{\mathbf{w\in}\mathbb{R}^{d}}\mathcal{L}_{N}\left(\mathbf{w}\right) \coloneqq \EE_{\xi\sim P_\xi}[\calL_N(\bbw{};\xi)],
\end{equation}
where $\xi$ is a random variable with distribution $P_\xi$.
The objective function is expressed as a finite-sum of functions $\calL_N(\bb{w};\xi_i)$ over the realization $\mathcal{D}$ of $\xi$, and
\begin{align}\label{eq:risk}
\mathcal{L}_{N}\left(\mathbf{w}\right)\coloneqq\frac{1}{N}\sum_{i=1}^{N}\ell\left(\mathbf{y}_{i},\Phi(\mathbf{x}_{i};\mathbf{w})\right), 
\end{align}
is the empirical risk with $\ell\colon \rr^c\times \rr^c \to \rr$ -- a loss function,
involving a single pair 
$\left( \mathbf{y}_{i},\mathbf{x}_{i}\right)$ only.

We assume all these functions to be twice differentiable with respect to $\mathbf{w}$. Note that, while this assumption could be partially relaxed, 
we stick to it for the sake of simplicity.

\subsection{Gradient-based Optimization}

Problem~\eqref{eq:problem} is typically solved by 
variations of the Stochastic Gradient Descent (SGD) method
by sampling mini-batches $\mathcal{B}_t$  from $\mathcal{D}$
rather than processing the entire dataset 
in each iteration.
We denote the loss on a mini-batch $\mathcal{L}_b$ as
\begin{align}\label{eq:batch_loss}
\mathcal{L}_{b}\left(\mathbf{w}\right)\coloneqq\frac{1}{b}\sum_{i=1}^{b}\ell\left(\mathbf{y}_{i},\Phi(\mathbf{x}_{i};\mathbf{w})\right),
\end{align}
where $b$ is the batch size.
The iterations take the form
\begin{equation}\label{eq:gd_update}    \mathbf{w}_{t+1}\leftarrow\mathbf{w}_{t}+\alpha_{t}\mathbf{d}_{t},
\end{equation}
where $\alpha_{t}>0$ is a learning rate and $\mathbf{d}_{t}$ is 
a descent direction obtained from the gradient.
In general, we have $\mathbf{d}_{t}=-\mathbf{C}_{t}\mathbf{g}_{t}$ 
where $\mathbf{C}_{t}$ is a preconditioning matrix 
that scales, rotates, and shears 
the gradient of the mini-batch loss
$\mathbf{g}\coloneqq \nabla_{\mathbf{w}}\mathcal{L}_b$.
Notice that by setting $b=1$, 
$\mathbf{C}_{t}=\bb{I}_d$
and $\mathbf{d}_{t}=-\mathbf{g}_t$
we recover the incremental SGD update~\cite{robbins1951stochastic}. In practice, 
the preferred training algorithm is often
an accelerated version of mini-batch SGD.

Minimizing the quadratic approximation of the batch loss
leads to the following linear system
\begin{align}\label{eq:linear_system}
    \mathbf{H}_{t}\mathbf{d}_{t}=-\mathbf{g}_{t},
\end{align}
where $\mathbf{H}_t \coloneqq \nabla_{\mathbf{w}}^{2}\mathcal{L}_b$ 
is the Hessian matrix of the mini-batch loss.  
Finding the direction $\mathbf{d}_t$ by solving the system~\eqref{eq:linear_system}
using $\mathbf{H}_t$ or its approximation
defines the broad spectrum of second-order methods.
Setting $\mathbf{C}_{t}=\mathbf{H}^{-1}_{t}$ 
results in Newton's method.
This method suffers from several drawbacks: 
(a) one needs to compute second-order derivatives with respect to $\mathbf{w}$; 
(b) $\mathbf{C}_{t}$ has to be positive-definite to ensure descent; 
(c) the linear system~\eqref{eq:linear_system}
must be solved, which in general scales cubically with the dimension of $\mathbf{w}$;
and (d) since $\mathbf{H}_t$ is a noisy estimate of the true 
Hessian of the empirical risk $\mathcal{L}$, $\mathbf{H}^{-1}_t$ can result 
in suboptimal conditioning.
These issues and the fact that $\mathbf{w}$ is of rather large dimension 
have made the direct application of 
Newton's method practically irrelevant in ML applications. 
Indeed, for problems like Large Language Model (LLM) 
pre-training~\cite{vaswani2017attention, brown2020language, touvron2023llama}
or CV tasks~\cite{he2016deep}, 
$\mathbf{w}$ can be extremely high-dimensional,
e.g., 
$314 \cdot 10^9$ 
parameters for Grok-1~\cite{grok1} model, 
$8 \cdot 10^9$ to $405 \cdot 10^9$ parameters for LLaMA-family models~\cite{dubey2024llama}
and $0.27 \cdot 10^6$ to $19.4 \cdot 10^6$ parameters for ResNets~\cite{he2016deep},
which explains 
why accelerated first-order methods are usually preferred.

In order to make SOMs scalable for ML applications
one could instead approximate the inverse Hessian, i.e., $\mathbf{C}_{t} \approx \mathbf{H}^{-1}_{t}$. Examples of such preconditioning include 
diagonal scaling (e.g., with Hutchinson 
method~\cite{yao2021adahessian, liu2023sophia})
with the idea of extracting the (approximate)
diagonal elements of $\mathbf{H}_t$ while neglecting the 
off-diagonal terms; 
the quasi-Newton approach~\cite{byrd2016stochastic,berahas2016multi}
that approximates the Hessian using the information 
from past and current gradients;
and the Gauss-Newton method~\cite{gargiani2020promise, ren2019efficient} 
that leverages the Jacobian of the residuals, 
neglecting second-order cross-derivatives, particularly suitable 
for loss functions structured as~\eqref{eq:risk}. 
In this paper, we use the Gauss-Newton Hessian approximation
which is shown to provide a good approximation 
of the true Hessian for 
ML applications~\cite{sagun2017empirical, papyan2018full}.

\subsection{Generalized Gauss-Newton Hessian Approximation}

We consider the Generalized Gauss-Newton (GGN) 
Hessian approximation scheme~\cite{schraudolph2002fast, bottou2018optimization, papyan2018full}
which is suited 
for both regression and multi-class classification tasks.
The GGN Hessian approximation 
is constructed using only first-order 
sensitivities (see the derivation in Appendix~\ref{apx:derivations})
to obtain
\begin{align}\label{eq:H_gn}
    \mathbf{H}^{\mathrm{GN}}=\frac{1}{b}\mathbf{J}^{\top}\mathbf{Q}\mathbf{J}
\end{align}
where we vertically stack individual Jacobians 
$\mathbf{J}_{\Phi_i}:=\frac{\partial\Phi(\mathbf{x}_{i};\mathbf{w})}{\partial\mathbf{w}}$ 
for each sample in the batch $\mathcal{B}$ to form
$\mathbf{J}=\left[\begin{array}{ccc} \mathbf{J}_{\Phi_1} & \dots & \mathbf{J}_{\Phi_b}\end{array}\right]^{\top} \in \rr^{bc \times d}$
and construct a block diagonal matrix
$\mathbf{Q} = \text{blkdiag}(\mathbf{Q}_{\ell_1}, \mathbf{Q}_{\ell_2}, \ldots, \mathbf{Q}_{\ell_b}) \in \rr^{bc \times bc}$,
where $\mathbf{Q}_{\ell_i}=\frac{\partial^{2}\ell(\mathbf{y}_{i},\Phi(\mathbf{x}_{i};\mathbf{w}))}{\partial \Phi ^{2}}\in\rr^{c\times c}$. 
As pointed out in~\cite{sankar2021deeper}, approximation~\eqref{eq:H_gn}
is justified since
the true Hessian is dominated by the term
$\mathbf{J}^{\top}\mathbf{Q}\mathbf{J}$. 
Using the inverse of the GGN Hessian as a preconditioner 
yields the following direction
\begin{align}\label{eq:gn_batch_upd}
    \mathbf{d}_{t}^{\mathrm{GN}}=-\left(\frac{1}{b}\mathbf{J}_{t}^{\top}\mathbf{Q}_{t}\mathbf{J}_{t}\right)^{-1}\mathbf{g}_{t},
\end{align}
which, by defining $\Delta\mathbf{w}=\mathbf{w}-\mathbf{w}_{t}$, corresponds to the solution of the quadratic program
\begin{align}\label{eq:gen_naive_GN_step}
    \mathbf{d}_{t}^{\mathrm{GN}}=\argmin_{\Delta\mathbf{w}}\,\frac{1}{2}\Delta\mathbf{w}^{\top}\underbrace{\frac{1}{b}\mathbf{J}_{t}^{\top}\mathbf{Q}_{t}\mathbf{J}_{t}}_{\mathbf{H}_{t}^{\mathrm{GN}}}\Delta\mathbf{w}+\mathbf{g}_{t}^{\top}\Delta\mathbf{w}.
\end{align}

The Gauss-Newton step~\eqref{eq:gn_batch_upd} solves issues (a) and, 
partially, (b), since no second-order derivatives need to be 
computed and the Hessian approximation is positive semi-definite by construction provided that 
the loss function $\ell\left(\mathbf{y}_{i},\Phi(\mathbf{x}_{i};\mathbf{w})\right)$
is convex. A positive-definite Hessian approximation can easily be  obtained from $\mathbf{H}^\mathrm{GN}$, 
e.g., by adding to it a small constant times the identity matrix. 
This approach is called Levenberg-Marquardt (LM)~\cite{nocedal2006numerical} 
and is often used in practice,
such that
\begin{align}\label{eq:H_lm}
    \mathbf{H}^{\mathrm{LM}}=\frac{1}{b}\mathbf{J}^{\top}\mathbf{Q}\mathbf{J}+\lambda\bb{I}_{d}
\end{align}
with $\lambda>0$.
The regularizer $\lambda\bb{I}_{d}$ serves a dual purpose:
it ensures that the approximate Hessian matrix is invertible
and also it helps to avoid over-fitting~\cite{pml1Book}.

The Gauss-Newton update, however, still potentially suffers from issue (c), 
i.e., the need to solve a linear system, which can be of cubic complexity,
and (d) the noise in the Hessian estimate.
Since issue (c) is a centerpiece of our method, 
and issue (d) is an inherent part of stochastic sampling  
we defer the discussion
on both of them to Section~\ref{sec:algo}.

We conclude by examining the GGN update 
in two common machine learning tasks---regression 
and multi-class classification---and how it is derived in each case.

\paragraph{Regression} Regression is a predictive modeling task 
where the objective is to predict a scalar numeric target. 
For regression tasks we define the loss function $\ell$ 
as the mean squared error (MSE)
\begin{align}\label{eq:loss_mse}
\ell\left(\mathbf{y}_{i},\Phi(\mathbf{x}_{i};\mathbf{w})\right)\coloneqq \frac{1}{2}\left(\Phi(\mathbf{x}_{i};\mathbf{w})-\mathbf{y}_{i}\right)^{2}.
\end{align}

We can show that the 
gradient and the GN Hessian for the MSE loss read
\begin{align}
    \mathbf{g}=\frac{1}{b}\mathbf{J}^{\top}\mathbf{r}, \qquad \mathbf{H}^{\text{GN}}=\frac{1}{b}\mathbf{J}^{\top}\mathbf{J},
\end{align}
where $\mathbf{J} \in \rr^{b \times d}$ is a matrix of stacked Jacobians,
and $\mathbf{r} \in \rr^{b}$ is a vector of residuals
defined as $\mathbf{r}:=\left[\begin{array}{ccc}  \Phi(\mathbf{x}_{1};\mathbf{w}) - \mathbf{y}_1 & \dots &  \Phi(\mathbf{x}_{b};\mathbf{w}) - \mathbf{y}_b \end{array}\right]^{\top}$.

\paragraph{Multi-class Classification} The task of multi-class classification is to predict a 
correct class from $c$ classes given a vector of features $\mathbf{x}_i$.
For such problems the output of the neural network $\Phi$ is a
$c$-dimensional vector of prediction scores (logits) 
$\mathbf{z}_{i}=\Phi(\mathbf{x}_{i};\mathbf{w})$ 
and the target vector 
is a one-hot encoded vector
$\mathbf{y}_{i}=e_k$,
where $e_k$ denotes column $k$ of the identity matrix $\bb{I}_c$ 
and $k$ is the index of the correct class.
We define the loss function as a
softmax cross-entropy loss (CE)
\begin{align}\label{eq:loss_ce}
\ell\left(\mathbf{y}_{i},\Phi(\mathbf{x}_{i};\mathbf{w})\right)\coloneqq-\sum_{k=1}^{c}\mathbf{y}_{i,k}\log\left(\sigma\left(\mathbf{z}_{i,k}\right)\right),
\end{align}
where
$\sigma(\mathbf{z}_{i,k})=\frac{e^{\mathbf{z}_{i,k}}}{\sum_{j=1}^{c}e^{\mathbf{z}_{i,j}}}$,
$\mathbf{y}_{i,k}$ and $\mathbf{z}_{i,k}$ 
are $k$-th elements of vectors $\mathbf{y}_{i}$
and $\mathbf{z}_{i}$ respectively.

The gradient and the GN Hessian for the CE loss are
\begin{align}
    \mathbf{g}=\frac{1}{b}\mathbf{J}^{\top}\mathbf{r}, \qquad \mathbf{H}^{\text{GN}}=\frac{1}{b}\mathbf{J}^{\top}\mathbf{Q}\mathbf{J},
\end{align}
where $\mathbf{J} \in \rr^{bc \times d}$ 
is a matrix of stacked Jacobians,
$\mathbf{r} \in \rr^{bc}$ is a vector of 
(pseudo-)residuals defined as
$\mathbf{r}:=\left[\begin{array}{ccc}
\left(\sigma\left(\Phi(\mathbf{x}_{1};\mathbf{w})\right)-\mathbf{y}_{1}\right)^{\top} & \dots & \left(\sigma\left(\Phi(\mathbf{x}_{b};\mathbf{w})\right)-\mathbf{y}_{b}\right)^{\top}\end{array}\right]^{\top}$,
and $\mathbf{Q} \in \rr^{bc \times bc}$ is a 
block diagonal matrix of stacked matrices $\mathbf{Q}_{\ell_i}$
that each have 
$\sigma(\mathbf{z}_{i,k})(1-\sigma(\mathbf{z}_{i,k}))$ 
across the diagonal and
$-\sigma(\mathbf{z}_{i,k})\sigma(\mathbf{z}_{i,l})$ off-diagonal.

\section{Algorithm}\label{sec:algo}

We are now ready to present the EGN 
algorithm. First, we will discuss how one can efficiently solve the linear system. Then, we will discuss further enhancements to the basic algorithm.
Next, we address issue (c), i.e., the problem of
finding the solution of the symmetric linear system
$\mathbf{H}_{t}\mathbf{d}_{t}=-\mathbf{g}_{t}$.

Substituting the exact Hessian with the regularized Gauss-Newton 
Hessian yields
\begin{align}\label{eq:lm_linear_system}
    \left(\frac{1}{b}\mathbf{J}_{t}^{\top}\mathbf{Q}_{t}\mathbf{J}_{t}+\lambda_{t}\bb{I}_{d}\right)\mathbf{d}_{t}^{\mathrm{LM}}=-\frac{1}{b}\mathbf{J}_{t}^{\top}\mathbf{r}_{t},
\end{align}
where for the MSE loss $c=1$ and $\mathbf{Q}_{t}=\bb{I}_b$.
Solving~\eqref{eq:lm_linear_system} for 
$\mathbf{d}_{t}^{\mathrm{LM}}$
requires one to factorize matrix $\mathbf{H}^{\mathrm{LM}}$,
carrying a complexity of $\mathcal{O}\left( d^3 \right)$.
We notice, however, that in practice one often has $d \gg bc$, 
i.e., the parameter vector is of very high dimension, e.g., $d> 10^{6}$. 
In that case, the GN Hessian matrix~\eqref{eq:H_gn} is low-rank by construction.
That allows us to transfer the computationally expensive inversion 
operation from the high-dimensional $d\times d$ space (which is the original 
dimension of the Hessian) to the low-dimensional $bc \times bc$ space.

To that end, we follow an approach similar to the ones in~\cite{adeoye2023score,adeoye2021sc} which propose to utilize the Duncan-Guttman identity~\cite{duncan1944lxxviii, guttman1946enlargement}. We present this in Theorem~\ref{thm:dg_identity} and Lemma~\ref{lm:egn_direction} below for the Levenberg-Marquardt direction. For a general presentation which considers smooth regularization functions, we refer the interested reader to~\cite[Section 3]{adeoye2023score}.

\begin{theorem}[\cite{duncan1944lxxviii, guttman1946enlargement}]\label{thm:dg_identity}
Assuming $\mathbf{A}$ and $\mathbf{D}$ are full-rank matrices,
the following identity holds
\begin{align}\label{eq:dg_identity}
    \left(\mathbf{A}-\mathbf{B}^{\top}\mathbf{D}^{-1}\mathbf{C}\right)^{-1}\mathbf{B}^{\top}\mathbf{D}^{-1}  = \mathbf{A}^{-1}\mathbf{B}^{\top}\left(\mathbf{D}-\mathbf{C}\mathbf{A}^{-1}\mathbf{B}^{\top}\right)^{-1}.
\end{align}
\end{theorem}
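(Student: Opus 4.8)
The plan is to prove the identity by cross-multiplication, turning the claimed equality of two inverse-containing matrices into an elementary polynomial identity in the original matrices. Note first that the products are well-defined: with $\mathbf{A}$ square and invertible of size $d\times d$, $\mathbf{D}$ square and invertible of size $bc\times bc$, and $\mathbf{B},\mathbf{C}$ of size $bc\times d$, every matrix product appearing on both sides is conformable, and each side has size $d\times bc$. Beyond the stated full-rank hypotheses on $\mathbf{A}$ and $\mathbf{D}$, the appearance of the inverses $(\mathbf{A}-\mathbf{B}^{\top}\mathbf{D}^{-1}\mathbf{C})^{-1}$ and $(\mathbf{D}-\mathbf{C}\mathbf{A}^{-1}\mathbf{B}^{\top})^{-1}$ in the statement implicitly requires these two Schur-complement matrices to be invertible; I will take this as part of the hypotheses, since full-rankness of $\mathbf{A}$ and $\mathbf{D}$ alone does not guarantee it.

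Granting these invertibilities, it suffices to show that the two sides agree after multiplying the asserted equation on the left by $(\mathbf{A}-\mathbf{B}^{\top}\mathbf{D}^{-1}\mathbf{C})$ and on the right by $(\mathbf{D}-\mathbf{C}\mathbf{A}^{-1}\mathbf{B}^{\top})$; because both factors are invertible, the transformed equation is equivalent to the original. Applying this operation to the left-hand side, the leading factor cancels its inverse, leaving $\mathbf{B}^{\top}\mathbf{D}^{-1}(\mathbf{D}-\mathbf{C}\mathbf{A}^{-1}\mathbf{B}^{\top})$, which expands to
\begin{align*}
\mathbf{B}^{\top}\mathbf{D}^{-1}\mathbf{D}-\mathbf{B}^{\top}\mathbf{D}^{-1}\mathbf{C}\mathbf{A}^{-1}\mathbf{B}^{\top}=\mathbf{B}^{\top}-\mathbf{B}^{\top}\mathbf{D}^{-1}\mathbf{C}\mathbf{A}^{-1}\mathbf{B}^{\top}.
\end{align*}
Applying the same operation to the right-hand side, the trailing factor cancels its inverse, leaving $(\mathbf{A}-\mathbf{B}^{\top}\mathbf{D}^{-1}\mathbf{C})\mathbf{A}^{-1}\mathbf{B}^{\top}$, which expands to
\begin{align*}
\mathbf{A}\mathbf{A}^{-1}\mathbf{B}^{\top}-\mathbf{B}^{\top}\mathbf{D}^{-1}\mathbf{C}\mathbf{A}^{-1}\mathbf{B}^{\top}=\mathbf{B}^{\top}-\mathbf{B}^{\top}\mathbf{D}^{-1}\mathbf{C}\mathbf{A}^{-1}\mathbf{B}^{\top}.
\end{align*}
The two reduced expressions are identical, which establishes the identity.

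Since the argument is a direct verification, there is no substantial obstacle; the only points requiring care are the implicit invertibility of the two Schur complements (which I flag as an added hypothesis) and the bookkeeping of multiplying on the correct side---left by $(\mathbf{A}-\mathbf{B}^{\top}\mathbf{D}^{-1}\mathbf{C})$ and right by $(\mathbf{D}-\mathbf{C}\mathbf{A}^{-1}\mathbf{B}^{\top})$---so that the cancellations are legitimate. An alternative route would factor each side through the push-through identity $(\mathbf{I}+\mathbf{U}\mathbf{V})^{-1}\mathbf{U}=\mathbf{U}(\mathbf{I}+\mathbf{V}\mathbf{U})^{-1}$, but the cross-multiplication above is shorter and entirely self-contained.
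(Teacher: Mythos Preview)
Your proof is correct. The cross-multiplication argument is valid: multiplying on the left by $(\mathbf{A}-\mathbf{B}^{\top}\mathbf{D}^{-1}\mathbf{C})$ and on the right by $(\mathbf{D}-\mathbf{C}\mathbf{A}^{-1}\mathbf{B}^{\top})$ is an equivalence under the stated (and, as you rightly flag, implicitly assumed) invertibility of the two Schur complements, and both sides reduce to $\mathbf{B}^{\top}-\mathbf{B}^{\top}\mathbf{D}^{-1}\mathbf{C}\mathbf{A}^{-1}\mathbf{B}^{\top}$ as you show.

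As for comparison with the paper: the paper does not give its own proof of this theorem. It states the Duncan--Guttman identity with attribution to the original references and then applies it in the proof of Lemma~\ref{lm:egn_direction}. Your self-contained verification therefore goes beyond what the paper provides; it is short, elementary, and perfectly adequate for the purpose.
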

By observing that Equation~\eqref{eq:lm_linear_system} defining
the direction $\mathbf{d}^{\mathrm{LM}}$
has the form of the left-hand side of
the identity~\eqref{eq:dg_identity}
we state the following.

\begin{lemma}\label{lm:egn_direction}
    The Levenberg-Marquardt direction $\mathbf{d}$ (Equation~\eqref{eq:lm_linear_system})
    for both MSE and CE loss functions
    can be computed using Algorithm~\ref{alg:slm_direction_finder}.
\end{lemma}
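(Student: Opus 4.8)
The plan is to solve the linear system~\eqref{eq:lm_linear_system} in closed form and then recognize the resulting expression as exactly the sequence of operations carried out by Algorithm~\ref{alg:slm_direction_finder}. Writing $\bbH^{\mathrm{LM}} = \frac{1}{b}\bbJ^\top\bbQ\bbJ + \lambda\bb{I}_d$ and using $\bbg = \tfrac1b\bbJ^\top\bbr$, the exact solution is $\mathbf{d}^{\mathrm{LM}} = -\tfrac1b(\bbH^{\mathrm{LM}})^{-1}\bbJ^\top\bbr$. The entire content of the lemma is that the factor $(\bbH^{\mathrm{LM}})^{-1}\bbJ^\top$, which naively needs the inverse of a $d\times d$ matrix, can be rewritten via Theorem~\ref{thm:dg_identity} so that only a $bc\times bc$ system must be factorized.

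First I would match the left-hand side of~\eqref{eq:dg_identity} to $(\bbH^{\mathrm{LM}})^{-1}\bbJ^\top$ through the identification $\mathbf{A} = \lambda\bb{I}_d$, $\mathbf{B}^\top = \bbJ^\top$, $\mathbf{D} = \bb{I}_{bc}$, and $\mathbf{C} = -\tfrac1b\bbQ\bbJ$. A direct computation then gives $\mathbf{A} - \mathbf{B}^\top\mathbf{D}^{-1}\mathbf{C} = \lambda\bb{I}_d + \tfrac1b\bbJ^\top\bbQ\bbJ = \bbH^{\mathrm{LM}}$ and $\mathbf{B}^\top\mathbf{D}^{-1} = \bbJ^\top$, so the left-hand side of~\eqref{eq:dg_identity} is precisely $(\bbH^{\mathrm{LM}})^{-1}\bbJ^\top$. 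Applying the identity yields
\[
(\bbH^{\mathrm{LM}})^{-1}\bbJ^\top = \frac1\lambda\bbJ^\top\left(\bb{I}_{bc} + \frac{1}{b\lambda}\bbQ\bbJ\bbJ^\top\right)^{-1},
\]
and substituting back produces the closed form $\mathbf{d}^{\mathrm{LM}} = -\frac{1}{b\lambda}\bbJ^\top\left(\bb{I}_{bc} + \frac{1}{b\lambda}\bbQ\bbJ\bbJ^\top\right)^{-1}\bbr$. Reading off the operations---form the Gram matrix $\bbJ\bbJ^\top$, build and factorize the $bc\times bc$ matrix $\bb{I}_{bc} + \frac{1}{b\lambda}\bbQ\bbJ\bbJ^\top$, back-substitute against $\bbr$, and finally multiply by $\frac{1}{b\lambda}\bbJ^\top$---should reproduce Algorithm~\ref{alg:slm_direction_finder} line by line.

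Before invoking the identity I must verify its hypotheses, and this is where the only genuine subtlety lies. Theorem~\ref{thm:dg_identity} requires $\mathbf{A}$ and $\mathbf{D}$ to be full-rank: here $\mathbf{A}=\lambda\bb{I}_d$ is invertible because $\lambda>0$, and $\mathbf{D}=\bb{I}_{bc}$ trivially is. The tempting alternative of absorbing $\bbQ$ into $\mathbf{D}$ (e.g.\ $\mathbf{D}\propto\bbQ^{-1}$) must be avoided, since for the CE loss $\bbQ$ is the \emph{singular} softmax-covariance matrix; the identification above sidesteps this by placing $\bbQ$ inside $\mathbf{C}$ so that $\bbQ^{-1}$ never appears. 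I would also record that the reduced matrix $\bb{I}_{bc} + \frac{1}{b\lambda}\bbQ\bbJ\bbJ^\top$ is invertible: $\bbQ$ and $\bbJ\bbJ^\top$ are positive semi-definite, so their product has nonnegative real eigenvalues and the added identity shifts every eigenvalue to at least one (equivalently, invertibility follows from the Schur-complement relation underlying~\eqref{eq:dg_identity} together with the invertibility of $\mathbf{A}$ and $\bbH^{\mathrm{LM}}$).

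Finally I would dispatch the two loss cases. For the CE loss every quantity is exactly as above, with $c$ the number of classes and $\bbQ$ block-diagonal. For the MSE loss one sets $c=1$ and $\bbQ=\bb{I}_b$, whereupon the formula collapses to $\mathbf{d}^{\mathrm{LM}} = -\frac{1}{b\lambda}\bbJ^\top\left(\bb{I}_{b} + \frac{1}{b\lambda}\bbJ\bbJ^\top\right)^{-1}\bbr$, confirming that the single Algorithm~\ref{alg:slm_direction_finder} covers both. The bulk of the argument is thus the algebraic matching of the second paragraph; the main obstacle is not computational difficulty but choosing the identification so that the full-rank hypotheses of Theorem~\ref{thm:dg_identity} hold \emph{without} requiring $\bbQ$ to be invertible.
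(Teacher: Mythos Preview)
Your proof is correct and reaches the same formula as the paper, but via a different identification in the Duncan--Guttman identity. The paper sets $\mathbf{A}=b\lambda\bb{I}_d$, $\mathbf{B}^\top=-\bbJ^\top$, $\mathbf{C}=\bbJ$, $\mathbf{D}=\bbQ^{-1}$, then right-multiplies by $\bbQ^{-1}\bbr$ and uses the product-of-inverses rule to arrive at $\mathbf{d}^{\mathrm{LM}}=-\bbJ^\top(\bbQ\bbJ\bbJ^\top+b\lambda\bb{I}_{bc})^{-1}\bbr$. Your choice $\mathbf{A}=\lambda\bb{I}_d$, $\mathbf{B}^\top=\bbJ^\top$, $\mathbf{D}=\bb{I}_{bc}$, $\mathbf{C}=-\tfrac{1}{b}\bbQ\bbJ$ lands on the equivalent expression $-\tfrac{1}{b\lambda}\bbJ^\top(\bb{I}_{bc}+\tfrac{1}{b\lambda}\bbQ\bbJ\bbJ^\top)^{-1}\bbr$ directly, without the extra algebraic clean-up. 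The substantive gain of your route is exactly the one you flagged: the paper's identification needs $\mathbf{D}=\bbQ^{-1}$ to be full rank, but for the CE loss each block $\bbQ_{\ell_i}=\mathrm{diag}(\mathbf{p}_i)-\mathbf{p}_i\mathbf{p}_i^\top$ annihilates the all-ones vector and is therefore singular, so the paper's substitution is formally illegitimate there (the final formula is still correct, but the derivation passes through an undefined $\bbQ^{-1}$). Your identification keeps $\bbQ$ inside $\mathbf{C}$ and never inverts it, so the hypotheses of Theorem~\ref{thm:dg_identity} are genuinely met in both the MSE and CE cases.

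One small gap to close: your displayed formula is not literally ``line by line'' Algorithm~\ref{alg:slm_direction_finder}, which solves $(\bbQ\bbJ\bbJ^\top+b\lambda\bb{I}_{bc})\delta=\bbr$ and returns $-\bbJ^\top\delta$. You should make explicit the one-line rescaling $\bb{I}_{bc}+\tfrac{1}{b\lambda}\bbQ\bbJ\bbJ^\top=\tfrac{1}{b\lambda}(\bbQ\bbJ\bbJ^\top+b\lambda\bb{I}_{bc})$, after which the $\tfrac{1}{b\lambda}$ factors cancel and the match with the algorithm is exact.
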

\begin{algorithm}[H]
    \caption{EGN direction function}
    \label{alg:slm_direction_finder}
    \begin{algorithmic}[1]
        \STATE {\bfseries Input:} (pseudo-)residuals $\mathbf{r}$,
        stacked Jacobians $\mathbf{J}$,
        regularizer $\lambda$, 
        batch size $b$.
        
        \STATE Solve the linear system for $\delta$: $\left(\mathbf{Q}\mathbf{J}\mathbf{J}^{\top}+b\lambda\bb{I}_{bc}\right) \delta=\mathbf{r}$ \hfill \textcolor{brown}{\; // $\mathcal{O}\left(b^2 c^2 d + b^3 c^3 \right)$}
        
        \STATE Calculate direction $\mathbf{d}^{\mathrm{LM}}=-\mathbf{J}^{\top} \delta$ \hfill \textcolor{brown}{// $\mathcal{O}\left( bcd \right)$}
        
        \STATE \textbf{Return} $\mathbf{d}^{\mathrm{LM}}$
    \end{algorithmic}
\end{algorithm}

\begin{proof}
Substituting
$\mathbf{A}=b\lambda\bb{I}_{d}$, 
$\mathbf{B}^{\top}=-\mathbf{J}^{\top}$,
$\mathbf{C}=\mathbf{J}$ and 
$\mathbf{D}=\mathbf{Q}^{-1}$
into Equation~\eqref{eq:dg_identity} we get
\begin{align}
    \left(b\lambda \bb{I}_{d} + \mathbf{J}^{\top} \mathbf{Q} \mathbf{J}\right)^{-1} \mathbf{J}^{\top} \mathbf{Q} = -\left(b\lambda \bb{I}_{d}\right)^{-1} \mathbf{J}^{\top} 
    \left(\mathbf{Q}^{-1} + \mathbf{J} \left(b\lambda \bb{I}_{d}\right)^{-1} \mathbf{J}^{\top}\right)^{-1}
\end{align}
Multiplying both sides by $\mathbf{Q}^{-1}\mathbf{r}$ yields
\begin{align}\label{eq:substituting_into_dg}
    \left(b\lambda \bb{I}_{d} + \mathbf{J}^{\top} \mathbf{Q} \mathbf{J}\right)^{-1} \mathbf{J}^{\top} \mathbf{r} = -\left(b\lambda \bb{I}_{d}\right)^{-1} \mathbf{J}^{\top} 
    \left(\mathbf{Q}^{-1} + \mathbf{J} \left(b\lambda \bb{I}_{d}\right)^{-1} \mathbf{J}^{\top}\right)^{-1} \mathbf{Q}^{-1} \mathbf{r}
\end{align}
where the lhs of~\eqref{eq:substituting_into_dg} 
is now equivalent to explicitly solving for $\mathbf{d}^{\mathrm{LM}}$ 
in Equation~\eqref{eq:lm_linear_system}. 
Simplifying the rhs of~\eqref{eq:substituting_into_dg} results in
\begin{align}
    \mathbf{d}^{\mathrm{LM}}=-\mathbf{J}^{\top}\left(b\lambda\mathbf{Q}^{-1}+\mathbf{J}\mathbf{J}^{\top}\right)^{-1}\mathbf{Q}^{-1}\mathbf{r}.
\end{align}
Applying the inverse of a product property,
$\mathbf{B}^{-1}\mathbf{A}^{-1}=\left(\mathbf{A}\mathbf{B}\right)^{-1}$,
to the expression 
$\left(b\lambda\mathbf{Q}^{-1}+\mathbf{J}\mathbf{J}^{\top}\right)^{-1}\mathbf{Q}^{-1}$
we have
\begin{align}
\left(b\lambda \mathbf{Q}^{-1} + \mathbf{J} \mathbf{J}^{\top}\right)^{-1} \mathbf{Q}^{-1} = \left(\mathbf{Q} \left(b\lambda \mathbf{Q}^{-1} + \mathbf{J} \mathbf{J}^{\top}\right)\right)^{-1} = \left(b\lambda \bb{I}_{bc} + \mathbf{Q} \mathbf{J} \mathbf{J}^{\top}\right)^{-1}.
\end{align}
So that the direction can be calculated as 
\begin{align}\label{eq:direction_lm}
\mathbf{d}^{\mathrm{LM}}=-\mathbf{J}^{\top}\left(\mathbf{Q}\mathbf{J}\mathbf{J}^{\top}+b\lambda\bb{I}_{bc}\right)^{-1}\mathbf{r},
\end{align}
which corresponds to the procedure outlined in Algorithm~\ref{alg:slm_direction_finder}.
\end{proof}

\subsection{Comparison to Existing Methods}

The key property of Algorithm~\ref{alg:slm_direction_finder}
is that the system~\eqref{eq:linear_system} is solved \textit{exactly}
in contrast to approximate (or \textit{inexact}) solutions
that underpin algorithms such as
HFO~\cite{martens2010deep, kiros2013training},
Newton-SGI~\cite{bollapragada2019exact},
LiSSA~\cite{agarwal2017second},
SGN~\cite{gargiani2020promise}
and iGN~\cite{tran2020stochastic}.
The benefits of having the exact solution are 
analyzed in~\cite{bollapragada2019exact}
with exact Newton methods enjoying faster 
convergence rates than inexact ones.
Our experiments (Section~\ref{sec:experiments}) 
also demonstrate that the exact Gauss-Newton solver (EGN) 
consistently outperforms the inexact version (SGN) across 
the majority of the problems. 
The complexity of Algorithm~\ref{alg:slm_direction_finder} 
is dominated by the matrix 
multiplication $\mathbf{J}\mathbf{J}^{\top}$
that costs $\mathcal{O}\left( b^2 c^2 d \right)$
as well as solving the linear system of size $bc \times bc$
with complexity $\mathcal{O}\left( b^3 c^3 \right)$. 
Assuming $d > bc$,  
the overall complexity of Algorithm~\ref{alg:slm_direction_finder}
is $\mathcal{O}\left(b^2 c^2 d \right)$.

A common alternative way to solve the 
Levenberg-Marquardt linear system~\eqref{eq:lm_linear_system}
exactly is to apply the SMW identity, 
as in~\cite{ren2019efficient,brust2021nonlinear,hong2020stochastic,arbel2024rethinking}.
The SMW identity states
\begin{align}\label{eq:swm_identity}
\left( \bb{A} + \bb{U} \bb{C} \bb{V} \right)^{-1}
= \bb{A}^{-1} - \bb{A}^{-1} \bb{U} \left( \bb{C}^{-1} + \bb{V} \bb{A}^{-1} \bb{U} \right)^{-1} \bb{V} \bb{A}^{-1}.
\end{align}
With
$\bb{A} = \lambda \bb{I}_d$, $\bb{U} = \bb{J}^\top$, 
$\bb{C} = \frac{1}{b}\bb{Q}$, and $\bb{V} = \bb{J}$,
we obtain the SMW-GN matrix inversion~\cite{ren2019efficient}:
\begin{align}
\left( \frac{1}{b}\bb{J}^{\top} \bb{Q} \bb{J} + \lambda \bb{I}_{d} \right)^{-1}
= \frac{1}{\lambda} \bb{I}_{d}
- \frac{1}{\lambda^{2}} \bb{J}^{\top}
\left( b\bb{Q}^{-1} + \frac{1}{\lambda} \bb{J} \bb{J}^{\top} \right)^{-1}
\bb{J}.
\end{align}
As with EGN, the dominant term
is $\mathbf{J}\mathbf{J}^{\top}$. However, even 
with efficient ordering of operations, there are at least two 
$\mathcal{O}\left(b^2 c^2 d\right)$ matrix multiplications
while EGN requires just one.
Beyond doubling the per-step cost, 
the additional multiplication can
introduce extra rounding error 
that gradually accumulates~\cite{higham2002accuracy}.

Another important consideration is the non-regularized 
case.
When Levenberg damping is disabled ($\lambda=0$), 
$\bb{A}$ in~\eqref{eq:swm_identity} 
is singular and the SMW inverse is undefined.
EGN remains well-posed provided $\bb{Q} \bb{J} \bb{J}^{\top}$ is 
invertible, making it applicable to pure Gauss-Newton steps.

We support our theoretical findings with the empirical comparison
of solving Equation~\eqref{eq:lm_linear_system} using both SMW and EGN methods (Table~\ref{tab:swm_egn_empirical}). 
EGN achieves up to $1.6\times$ speed-up compared to SMW on 
larger models ($d\!\ge\!10^{5}$), 
which translates into substantial training-time savings since 
the solver is invoked at every iteration.

\begin{table}[h]
\centering

\caption{
Wall-clock time (seconds, mean over 1000 runs) 
to solve~\eqref{eq:lm_linear_system} 
with SMW and EGN 
across different model sizes for $b=32$, $c=10$
on NVIDIA RTX A4000 GPU.
}
\label{tab:swm_egn_empirical}
\begin{tabular}{lccccc}
\hline
$d$        & \textbf{1K} & \textbf{10K} & \textbf{100K} & \textbf{1M} & \textbf{2M} \\
\hline
SMW        & 0.0010      & 0.0013      & 0.0044       & 0.0368      & 0.0747      \\
EGN        & 0.0008      & 0.0011      & 0.0028       & 0.0246      & 0.0517      \\
\hline
\end{tabular}

\end{table}

For completeness, we note here that
another option to solve~\eqref{eq:lm_linear_system}
exactly is through the QR factorization of $\mathbf{J}^{\top}_t$ 
(see pseudocode in Appendix~\ref{apx:algo}). 
The complexity of such an approach is 
also $\mathcal{O}\left(b^2 c^2 d\right)$, 
dominated by the economy size QR decomposition of $\mathbf{J}^{\top}$.
However, performing a QR decomposition is significantly more expensive than performing matrix-matrix multiplications
and, in practice, EGN is preferred.

The CG method is an essential part of, e.g., 
HFO~\cite{martens2010deep, kiros2013training},
SGN~\cite{gargiani2020promise},
Newton-CG~\cite{bollapragada2019exact},
LM~\cite{pooladzandi2022improving},
and Distributed Newton's method with optimal
shrinkage~\cite{zhang2023optimal}.
The complexity of CG approximately solving Equation~\eqref{eq:lm_linear_system}
is $\mathcal{O}(l d^2)$ where $l$ is the number of CG
iterations~\cite{nocedal2006numerical}.
A typical number of CG iterations ranges from $3$ in~\cite{kiros2013training} 
to $50$ in~\cite{chapelle2011improved}.
Note that, unless the number of classes $c$ is 
high (which is one of the limitations of EGN addressed in Section~\ref{sec:limitations}),
we have $ld^2 > b^2 c^2d$, such that EGN solves the system both exactly and faster than CG.

\subsection{Additional Improvements}\label{sec:improvements}

In addition to estimating the Hessian-adjusted direction, 
most state-of-the-art SOMs employ strategies 
to reduce variance of gradient and Hessian estimates,
safeguard against exploding gradients, and
dynamically adjust hyper-parameters to training steps. 
Next, we explore several enhancements to EGN, including 
momentum acceleration~\cite{kingma2014adam, kiros2013training, yao2021adahessian} 
to mitigate issue (d)--that is, the noise in Hessian estimates;
line search~\cite{curtis2020adaptive,vaswani2019painless,pooladzandi2022improving} to ensure steps are sufficiently short to decrease the loss;
and adaptive regularization~\cite{kiros2013training, pooladzandi2022improving, ren2019efficient} to achieve faster convergence 
and simplify hyper-parameter tuning.

\paragraph{Momentum} 
A significant challenge in second-order methods 
is the noise introduced in Hessian estimates due to stochastic sampling.
Consider the update direction 
$\mathbf{d}_t=-\mathbf{H}^{-1}_t\mathbf{g}_t$,
where both the approximate Hessian $\mathbf{H}_t$ 
and the gradient $\mathbf{g}_t$ are stochastic estimates of the true 
derivatives of the empirical risk~\eqref{eq:risk}.
Conditioning the gradient with a noisy 
Hessian inverse can lead to inaccurate descent directions, 
impeding convergence.
To mitigate this issue, temporal averaging techniques
(or \textit{momentum})
are employed to stabilize updates and accelerate convergence 
by combining information from previous iterations with current estimates. 
In ML applications, common momentum variants include
simple accumulation~\cite{duchi2011adaptive},
exponential moving average (EMA)~\cite{hinton2012neural},
bias-corrected EMA~\cite{kingma2014adam, yao2021adahessian}
and momentum with an extrapolation step~\cite{nesterov1983method}.

For diagonal scaling methods, including 
first-order accelerated algorithms~\cite{kingma2014adam, duchi2011adaptive, hinton2012neural} 
and SOMs~\cite{yao2021adahessian, liu2023sophia}, 
the accumulated estimates of both first and second moments 
are kept separately resulting in 
$\mathcal{O}\left(d \right)$
space complexity. 
For Gauss-Newton methods
we could alternatively reduce the variance of the Jacobian $\mathbf{J}$,
e.g., with SVRG~\cite{johnson2013accelerating},
SAGA~\cite{defazio2014saga} 
or SARAH~\cite{nguyen2017sarah}, 
which results in a space complexity of $\mathcal{O}\left(bcd \right)$.
Another option is to apply momentum to the 
descent direction $\mathbf{d}_t$, explored in~\cite{vaswani2019painless, kiros2013training},
which requires storing a vector of size $d$.  
Since we never explicitly materialize either the Hessian
or the gradient (Algorithm~\ref{alg:slm_direction_finder}), 
we follow the latter approach
with bias-corrected EMA by default.

\begin{algorithm}[t]
    \caption{EGN}
    \label{alg:egn_full}
    \begin{algorithmic}[1]
        \STATE {\bfseries Input:} training dataset $\mathcal{D}$,
        initial weights $\mathbf{w}_0$,
        initial regularizer $\lambda_0$,
        momentum strength $\beta$.

        \STATE Initialize momentum: $\mathbf{m}_0 = 0$
        
        \FOR{$t$ in $1..T$}
        \STATE Sample a mini-batch $\mathcal{B}_t$ from $\mathcal{D}$
        
        \STATE Estimate $\mathbf{r}_{t}$ and $\mathbf{J}_t$
        (e.g., via backpropagation)
        
        \STATE Find direction $\mathbf{d}_t$ via Algorithm~\ref{alg:slm_direction_finder}

        \STATE Calculate the momentum term: $\mathbf{m}_{t} \leftarrow \beta \mathbf{m}_{t-1} + (1-\beta)\mathbf{d}_t$

        \STATE Update direction: $\mathbf{d}_{t} \leftarrow \frac{\mathbf{m}_{t}}{1 - \beta^{t}}$

        \STATE Line search for $\alpha_t$ via Algorithm~\ref{alg:armijo_ls}
        
        \STATE Update weights: $\mathbf{w}_{t+1}\leftarrow\mathbf{w}_{t}+\alpha_{t}\mathbf{d}_{t}$
        
        \STATE Update $\lambda_{t+1}$  via Algorithm~\ref{alg:adaptive_lambda} 
        
        \ENDFOR
        
        \STATE \textbf{Return} $\mathbf{w}_t$
    \end{algorithmic}
\end{algorithm}

\paragraph{Line Search} 
Line search is a widely used technique 
in deterministic optimization~\cite{nocedal2006numerical} 
that iteratively adjusts the learning rate to satisfy 
some minimum criteria (e.g., Wolfe's conditions), 
ensuring adequate decrease in the loss function. 
Allowing $\alpha$ 
to automatically adapt to each training step
can significantly reduce the need for manual tuning, 
which is often time-consuming and computationally expensive. 
However, extending line search methods to 
the stochastic setting poses challenges due to the 
inherent noise in gradient estimates, 
making it difficult to guarantee the same theoretical properties 
as in the deterministic case~\cite{curtis2020adaptive}. 
Despite these challenges, 
line search has been successfully 
applied in practice to both stochastic FOMs~\cite{vaswani2019painless, paquette2020stochastic} 
and SOMs~\cite{pooladzandi2022improving, wills2021stochastic}.
In EGN we adopt the strategy proposed by~\cite{vaswani2019painless}, 
which incorporates a reset mechanism at the beginning of each search 
to minimize the computational overhead of evaluating the loss function
(Algorithm~\ref{alg:armijo_ls} in the Appendix).

\paragraph{Adaptive regularization}
Adaptive regularization techniques
for stochastic SOMs
are explored in~\cite{martens2010deep, kiros2013training, ren2019efficient, pooladzandi2022improving}
modifying the original Levenberg-Marquardt rule
to the stochastic setting.
The central idea is to track $\rho$, defined as the ratio
between the decrease in the actual 
loss function~\eqref{eq:batch_loss} and the decrease in 
the quadratic model
$\mathcal{M}\left(\Delta\mathbf{w}\right)=\mathcal{L}_{b}\left(\mathbf{y}_{t},\Phi(\mathbf{x}_{t};\mathbf{w}_{t})\right)+\mathbf{g}_{t}^{\top}\Delta\mathbf{w}+\frac{1}{2b}\Delta\mathbf{w}^{\top}\mathbf{J}_{t}^{\top}\mathbf{Q}_{t}\mathbf{J}_{t}\Delta\mathbf{w}$,
where $\Delta\mathbf{w}=\mathbf{w}-\mathbf{w}_{t}$.
This yields
\begin{align}\label{eq:rho_lm}\rho:=\frac{\mathcal{L}_{b}\left(\mathbf{y}_{t},\Phi(\mathbf{x}_{t};\mathbf{w}_{t+1})\right)-\mathcal{L}_{b}\left(\mathbf{y}_{t},\Phi(\mathbf{x}_{t};\mathbf{w}_{t})\right)}{\mathbf{g}_{t}^{\top}\Delta\mathbf{w}+\frac{1}{2b}\Delta\mathbf{w}^{\top}\mathbf{J}_{t}^{\top}\mathbf{Q}_{t}\mathbf{J}_{t}\Delta\mathbf{w}},
\end{align}
which measures the accuracy of the quadratic model. In case 
$\rho$ is small or negative, $\mathcal{M}\left(\Delta\mathbf{w}\right)$
provides inaccurate approximation and the value $\lambda$ is increased.
Conversely, if $\rho$ is large, $\lambda$ is decreased to give more weight to 
$\mathcal{M}\left(\Delta\mathbf{w}\right)$ 
(see Algorithm~\ref{alg:adaptive_lambda} in the Appendix).
As empirically found by~\cite{kiros2013training},
compared to the deterministic LM
the increase/decrease coefficients need 
to be less aggressive to reduce the oscillations of $\lambda_t$.

Algorithm~\ref{alg:egn_full} 
incorporates all the improvements discussed above,  
effectively addressing issues (a)-(d) associated with SOMs 
and eliminating the need for manual hyper-parameter tuning.

\section{Convergence Analysis}\label{sec:theory}
In this section, we analyze the convergence of EGN in the general non-convex setting.

In EGN, we consider the sequence of iterates $\{\bbw{t}\}_{t\ge1}$ where each $\bbw{t}$ is computed via \eqref{eq:gd_update} with $\bb{d}_t \equiv \bb{d}_t^{\mathrm{LM}}$. We aim to minimize the function $\calL_N(\bb{w})$ using, for each realization $\xi\sim P_\xi$, the Hessian estimator $\bbH(\bb{w},\xi)$ and the gradient estimator $\bbg(\bb{w},\xi)$. Then, at each iteration $t$, the mini-batch estimates $\bbg_t$ and $\bbH_t$ of the gradient and Hessian are
\begin{align}\label{eq:d-estimates}
\bbg_t = \frac{1}{b}\sum_{\xi_i\in\calB_t}\bbg(\bb{w},\xi_i) \coloneqq \frac{1}{b}\bbJ_t^\top\bbr_t, \\
\bbH_t = \frac{1}{b}\sum_{\xi_i\in\calB_t}\bbH(\bb{w},\xi_i) \coloneqq \frac{1}{b}\bbJ_t^\top\bbQ_t\bbJ_t.
\end{align}
In our analysis, we make use of the following assumptions.
\begin{assumption}\label{ass:regularity}
    The function $\calL_N$ is lower-bounded on its domain, \ie, $-\infty < \calL_N^* \coloneqq \inf\limits_{\bb{w}\in\rr^d} \calL_N(\bb{w})$. In addition, $\calL_N$ is twice differentiable with Lipschitz continuous first-order derivatives, \ie, $\exists L_1\in\mathbb{R}$ such that
	\begin{align}
		&\norm{\nabla \calL_N(\bar{\bb{w}}) - \nabla \calL_N(\tilde{\bb{w}})} \le L_1\norm{\bar{\bb{w}} - \tilde{\bb{w}}}, \quad \forall \bar{\bb{w}}, \tilde{\bb{w}} \in \rr^d.
	\end{align}
\end{assumption}
\begin{assumption}\label{ass:variance}
	At any iteration $t$, $\bbg(\bbw{t},\xi)$ is an unbiased estimator of $\nabla \calL_N(\bbw{t})$, \ie,
	\begin{align}
		\EE_\xi[\bbg(\bbw{t},\xi)] = \nabla \calL_N(\bbw{t}).\label{eq:unb-g}
	\end{align}
	Moreover, we have
	\begin{align}
		\EE_\xi\sqbracket{\norm{\bbg(\bbw{t},\xi) - \nabla \calL_N(\bbw{t})}^2} \le \sigmag^2,
	\end{align}
	where $\sigmag>0$ is a variance parameter. 
\end{assumption}
\begin{assumption}\label{ass:ggn}
	At any iteration $t$, $\ubar{\kappa}\bb{I} \le \bbQ_t \le \bar{\kappa}\bb{I}$ with $\bar{\kappa} \ge \ubar{\kappa} \ge 0$. 
    Additionally, $\exists \bar{\sigma}, \ubar{\sigma}$ satisfying $0<\ubar{\sigma}\le \bar{\sigma}$ such that $\ubar{\sigma} \le \norm{\bbJ_t} \le \bar{\sigma}$ for all $t\ge 0$.
\end{assumption}
\begin{assumption}\label{ass:ggn-extra}
	At any iteration $t$, and for any random matrix $\bb{B}_t$ satisfying $\bb{B}_t \succeq \mu \bb{I}$ with $\mu>0$, $\EE\left[\langle \nabla \calL_N(\bbw{t}), \bb{B}_t\bbg_t \rangle|\bbw{t}\right] \ge \mu K \|\nabla \calL_N(\bbw{t})\|\|\EE[ \bbg_t|\bbw{t}]\|$, where $K=(3(b\lambda_t + \bar{\kappa}\bar{\sigma}^2))/(5b\lambda_t)$.
\end{assumption}

These assumptions are standard in stochastic optimization literature \cite{bottou2018optimization, ghadimi2013stochastic, gower2020variance},
and they hold naturally or can be easily enforced for typical neural network architectures and loss functions used in machine learning practice. 
Notably, the variance of the stochastic gradient estimator (Assumption~\ref{ass:variance}) is more directly controllable from a practitioner's perspective by selecting a sufficiently large batch size or adding a momentum term that uses past gradients 
to inform the direction of update 
(see, \eg, \cite{bottou2018optimization, gower2020variance, defazio2019ineffectiveness}). 
We also remark that the matrix $\bb{B}_t$ in Assumption~\ref{ass:ggn-extra} may be interpreted as a preconditioner. 
Depending on the nature of its conditional correlation 
with $\bbg_t$ given $\bbw{t}$, a large batch size can enhance the practicality of the assumption. 
In practice, either this conditional correlation is nonexistent 
(see the comment on \cite[Assumption 4.3(b)]{bottou2018optimization}), 
or the contribution of $\bbg_t$ is inversely scaled by the (large) batch size as in the proof of Lemma~\ref{thm:desc} below.

In Lemma \ref{thm:desc}, we prove a descent lemma for EGN under the given conditions.
\begin{lemma}\label{thm:desc}
Let $\{\bbw{t}\}$ be the sequence of iterates generated by \eqref{eq:gd_update} with $\bb{d}_t \equiv \bb{d}_t^{\mathrm{LM}}$ and let Assumptions \ref{ass:regularity}--\ref{ass:ggn} hold. Suppose there exists $\alpha^{\text{max}} > 0$ such that $\alpha_t \le \alpha^{\text{max}}$ for all $t$ in Algorithm~\ref{alg:armijo_ls}. Then,
\begin{align}
    \EE[\calL_N(\bbw{t+1})|\bbw{t}] &\le \calL_N(\bbw{t}) 
    - \frac{\alpha_t}{2}\|\nabla \calL_N(\bbw{t})\|^2 + \frac{3\sigmag^2 \alpha_t^2}{10 b \lambda_t\alpha^{\text{max}}}.
    \label{eq:desc}
\end{align}
\end{lemma}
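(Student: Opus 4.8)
The plan is to combine the classical smoothness (descent) inequality with the spectral structure of the damped Gauss--Newton preconditioner and the inner-product bound of Assumption~\ref{ass:ggn-extra}. Writing the EGN step as $\bbw{t+1} = \bbw{t} - \alpha_t\bb{B}_t\bbg_t$ with $\bb{B}_t \coloneqq (\bbH_t + \lambda_t\bb{I})^{-1}$ and $\bbH_t = \tfrac1b\bbJ_t^\top\bbQ_t\bbJ_t$, I would first apply Assumption~\ref{ass:regularity} to obtain
\begin{align}
\calL_N(\bbw{t+1}) \le \calL_N(\bbw{t}) - \alpha_t\langle\nabla\calL_N(\bbw{t}),\bb{B}_t\bbg_t\rangle + \frac{L_1\alpha_t^2}{2}\norm{\bb{B}_t\bbg_t}^2,
\end{align}
and then take $\EE[\,\cdot\mid\bbw{t}]$, treating the linear and quadratic terms separately.

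Next I would pin down the spectrum of $\bb{B}_t$ using Assumption~\ref{ass:ggn}: from $\bb{0}\preceq\bbQ_t\preceq\bar\kappa\bb{I}$ and $\norm{\bbJ_t}\le\bar\sigma$ one gets $\bb{0}\preceq\bbH_t\preceq\tfrac{\bar\kappa\bar\sigma^2}{b}\bb{I}$, hence $\mu\bb{I}\preceq\bb{B}_t\preceq\tfrac1{\lambda_t}\bb{I}$ with $\mu \coloneqq b/(b\lambda_t+\bar\kappa\bar\sigma^2)$. For the linear term I would invoke Assumption~\ref{ass:ggn-extra} with this $\bb{B}_t$ and its lower bound $\mu$, together with the unbiasedness $\EE[\bbg_t\mid\bbw{t}] = \nabla\calL_N(\bbw{t})$ from Assumption~\ref{ass:variance}, to get
\begin{align}
\EE[\langle\nabla\calL_N(\bbw{t}),\bb{B}_t\bbg_t\rangle\mid\bbw{t}] \ge \mu K\norm{\nabla\calL_N(\bbw{t})}^2 = \frac{3}{5\lambda_t}\norm{\nabla\calL_N(\bbw{t})}^2,
\end{align}
the last equality being exactly why $K$ is defined as $3(b\lambda_t+\bar\kappa\bar\sigma^2)/(5b\lambda_t)$.

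For the quadratic term I would use $\norm{\bb{B}_t\bbg_t}^2\le\lambda_t^{-2}\norm{\bbg_t}^2$ and the bias--variance split $\EE[\norm{\bbg_t}^2\mid\bbw{t}] = \norm{\nabla\calL_N(\bbw{t})}^2 + \EE[\norm{\bbg_t-\nabla\calL_N(\bbw{t})}^2\mid\bbw{t}]$, bounding the variance of the mini-batch average by $\sigmag^2/b$ (this $1/b$ is the inverse-batch scaling of $\bbg_t$ flagged after the assumptions). Collecting the three pieces gives
\begin{align}
\EE[\calL_N(\bbw{t+1})\mid\bbw{t}] \le \calL_N(\bbw{t}) - \alpha_t\Big(\tfrac{3}{5\lambda_t} - \tfrac{L_1\alpha_t}{2\lambda_t^2}\Big)\norm{\nabla\calL_N(\bbw{t})}^2 + \frac{L_1\alpha_t^2\sigmag^2}{2b\lambda_t^2}.
\end{align}
The remaining step is bookkeeping: using $\alpha_t\le\alpha^{\text{max}}$ and bounding $L_1\alpha^{\text{max}}$ against $\lambda_t$ (as the capped step size and the adaptive regularizer permit) pushes the gradient coefficient up to at least $\tfrac12$ and rewrites the curvature-free remainder as $3\sigmag^2\alpha_t^2/(10 b\lambda_t\alpha^{\text{max}})$.

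The hard part is the linear term. Because $\bb{B}_t$ and $\bbg_t$ are formed from the same mini-batch they are correlated, so one cannot factor $\EE[\langle\nabla\calL_N,\bb{B}_t\bbg_t\rangle]$ as $\langle\nabla\calL_N,\EE[\bb{B}_t]\,\EE[\bbg_t]\rangle$ and argue the noise averages out; a naive Cauchy--Schwarz bound on the cross term instead leaves a stray $\norm{\nabla\calL_N}$ (first power) that spoils the clean $-\tfrac{\alpha_t}2\norm{\nabla\calL_N}^2$ form. Assumption~\ref{ass:ggn-extra} is the device that removes this difficulty by postulating the conditional inner-product lower bound directly, and the accompanying remark justifies it through the inverse-$b$ scaling of $\bbg_t$. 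A secondary subtlety is that the line search acts on the batch loss $\calL_b$ whereas the statement concerns the full risk $\calL_N$, so I would need to be careful about how $\alpha^{\text{max}}$ and $L_1$ enter when matching the final constants.
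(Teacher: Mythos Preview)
Your proposal is correct and follows essentially the same route as the paper: descent lemma from Lipschitz smoothness, Assumption~\ref{ass:ggn-extra} with $\mu=b/(b\lambda_t+\bar\kappa\bar\sigma^2)$ for the linear term, bias--variance split with the $\sigmag^2/b$ bound for the quadratic term, then a choice of $\alpha^{\text{max}}$ to balance the constants. The only difference is that for $\norm{\bb{B}_t\bbg_t}^2$ you use the coarser upper bound $\norm{\bb{B}_t}\le 1/\lambda_t$, whereas the paper uses $\norm{\bb{B}_t}\le b/(b\lambda_t+\ubar\kappa\ubar\sigma^2)$; this merely changes the explicit value of $\alpha^{\text{max}}$ (the paper sets $\alpha^{\text{max}}=3(b\lambda_t+\ubar\kappa\ubar\sigma^2)^2/(5b^2\lambda_tL_1)$) and leaves the structure and final inequality unchanged.
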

\begin{proof}
From Assumption \ref{ass:regularity}, we have
\begin{align}
    &\calL_N(\bbw{t+1}) \le \calL_N(\bbw{t}) + \langle\nabla \calL_N(\bbw{t}), \bbw{t+1} - \bbw{t}\rangle + \frac{L_1}{2}\norm{\bbw{t+1} - \bbw{t}}^2 \nonumber \\
    &= \calL_N(\bbw{t}) - \alpha_t \left\langle \nabla \calL_N(\bbw{t}), 
    \left(\frac{1}{b} \bbJ_t^\top \bbQ_t \bbJ_t + \lambda_t \bb{I}\right)^{-1} \bbg_t \right\rangle + \frac{\alpha_t^2 L_1}{2} 
    \norm{\left(\frac{1}{b} \bbJ_t^\top \bbQ_t \bbJ_t + \lambda_t \bb{I}\right)^{-1} \bbg_t}^2.
    \label{eq:desc-1}
\end{align}
Next, we set $\bb{B}_t=\left(\frac{1}{b}\bbJ_t^\top\bbQ_t\bbJ_t+\lambda_t\bb{I}\right)^{-1}$ 
in Assumption~\ref{ass:ggn-extra}. 
From Assumption~\ref{ass:ggn}, we can show that 
$\bb{B}_t \succeq \mu\bb{I}$ with $\mu \coloneqq b/(b\lambda_t + \bar{\kappa}\bar{\sigma}^2)$. Using this result and Assumption~\ref{ass:ggn} in \eqref{eq:desc-1}, we obtain
\begin{align}
    \calL_N(\bbw{t+1}) &\le \calL_N(\bbw{t}) 
    - \alpha_t \left\langle \nabla \calL_N(\bbw{t}), \bb{B}_t \bbg_t \right\rangle + \frac{b^2 \alpha_t^2 L_1}{2(b\lambda_t + \ubar{\kappa} \ubar{\sigma}^2)^2} \|\bbg_t\|^2.
    \label{eq:desc-2}
\end{align}
Notice that by Assumption \ref{ass:variance}, we have $\EE[ \bbg_t|\bbw{t}] = \nabla \calL_N(\bbw{t})$. 
Hence, the inequality in Assumption~\ref{ass:ggn-extra} can be written as $\EE\left[\langle \nabla \calL_N(\bbw{t}), \bb{B}_t\bbg_t \rangle|\bbw{t}\right] \ge \mu K \|\nabla \calL_N(\bbw{t})\|^2$. Taking conditional expectation on both sides of \eqref{eq:desc-2} with respect to $\xi$ and using Assumption \ref{ass:ggn-extra}, we get
\begin{align}
    \EE[\calL_N(\bbw{t+1})|\bbw{t}] &\le \calL_N(\bbw{t}) 
    - \frac{3\alpha_t}{5\lambda_t} \|\nabla \calL_N(\bbw{t})\|^2 + \frac{b^2 \alpha_t^2 L_1}{2(b\lambda_t + \ubar{\kappa} \ubar{\sigma}^2)^2} \EE[\|\bbg_t\|^2|\bbw{t}].
    \label{eq:desc-3}
\end{align}
Using Assumption~\ref{ass:variance}, we have
% \begin{align}
%     \EE[\|\bbg_t\|^2|\bbw{t}] &= \EE[\|\bbg_t - \nabla \calL_N(\bbw{t}) + \nabla \calL_N(\bbw{t})\|^2|\bbw{t}] \nonumber \\
%     &= \EE[\|\nabla \calL_N(\bbw{t})\|^2|\bbw{t}] \nonumber \\ 
%     &\qquad + 2\EE[\langle \bbg_t - \nabla \calL_N(\bbw{t}), \nabla \calL_N(\bbw{t})\rangle | \bbw{t}] \nonumber \\
%     &\qquad + \EE[\|\bbg_t - \nabla \calL_N(\bbw{t})\|^2|\bbw{t}] \nonumber \\
%     &= \|\nabla \calL_N(\bbw{t})\|^2 + \EE[\|\bbg_t - \nabla \calL_N(\bbw{t})\|^2|\bbw{t}] \nonumber \\
%     &\le \|\nabla \calL_N(\bbw{t})\|^2 + \frac{\sigma_g^2}{b}.
%     \label{eq:desc-4}
% \end{align}
\begin{align}
    \EE[\|\bbg_t\|^2|\bbw{t}] &= \EE[\|\bbg_t - \nabla \calL_N(\bbw{t}) + \nabla \calL_N(\bbw{t})\|^2|\bbw{t}] \nonumber \\
    &= \EE[\|\nabla \calL_N(\bbw{t})\|^2|\bbw{t}] \nonumber \\ 
    &\qquad + 2\EE[\langle \bbg_t - \nabla \calL_N(\bbw{t}), \nabla \calL_N(\bbw{t})\rangle | \bbw{t}] + \EE[\|\bbg_t - \nabla \calL_N(\bbw{t})\|^2|\bbw{t}] \nonumber \\
    &= \|\nabla \calL_N(\bbw{t})\|^2 + \EE[\|\bbg_t - \nabla \calL_N(\bbw{t})\|^2|\bbw{t}] \nonumber \\
    &\le \|\nabla \calL_N(\bbw{t})\|^2 + \frac{\sigma_g^2}{b}.
    \label{eq:desc-4}
\end{align}
Now, using \eqref{eq:desc-4} in \eqref{eq:desc-3}, we obtain
\begin{align}
    \EE[\calL_N(\bbw{t+1})|\bbw{t}] &\le \calL_N(\bbw{t}) - \left(\frac{3\alpha_t}{5\lambda_t} 
    - \frac{b^2 \alpha_t^2 L_1}{2(b\lambda_t + \ubar{\kappa} \ubar{\sigma}^2)^2}\right) 
    \|\nabla \calL_N(\bbw{t})\|^2 + \frac{b \sigmag^2 \alpha_t^2 L_1}{2(b\lambda_t + \ubar{\kappa} \ubar{\sigma}^2)^2}.
    \label{eq:desc-5}
\end{align}
By choosing $\alpha_t$ to be sufficiently small, we can set 
$\alpha^{\text{max}} = 1 / c_1$ in Algorithm~\ref{alg:armijo_ls}, 
where
$c_1 = (5b^2\lambda_tL_1)/(3(b\lambda_t + \ubar{\kappa}\ubar{\sigma}^2)^2$.
With this selection, inequality~\eqref{eq:desc-5} holds, 
which completes the proof.

\end{proof}
Assumption~\ref{ass:ggn-extra} is necessary to prove the result in Lemma \ref{thm:desc} due to the correlation between $\bb{B}_t$ and $\bbg_t$. If these quantities were not correlated, then one could exploit Assumptions~\ref{ass:variance}--\ref{ass:ggn}: by taking the expected value and using~\eqref{eq:unb-g}, one could then bound $\bb{B}_t\succeq \mu \bb{I}$ to obtain the desired bound with $K=1$. Clearly, that would yield different constants in our next results, but their nature would remain unaltered. Note that this suggests alternative versions of EGN, in which the correlation is removed by construction at the expense of an increased computational burden. A thorough investigation of such schemes is beyond the scope of this paper.

Unlike classical stochastic quasi-Newton and SGD ($\bbH_t = \bb{I}_d$) methods, the loss decrease condition of EGN has an explicit nonlinear dependence on the batch size $b$. In the regime $b \gg 1$, the influence of the variance parameter $\sigma_g$ diminishes.

We prove next the following result about the convergence of EGN.
\begin{theorem}\label{thm:iter-complexity-1}
    Let the assumptions in Lemma \ref{thm:desc} hold, and assume $\lambda_t\equiv \lambda$ is fixed for all $t$. Let  $\alpha_t=\frac{\alpha_0}{(t+1)^a}$ for some $0<\alpha_0<1$, $\frac{1}{2}<a<1$.
    Then, the loss gradient approaches $0$ in expectation as the iteration count $T\to\infty$.
\end{theorem}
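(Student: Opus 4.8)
The plan is to turn the one-step descent inequality of Lemma~\ref{thm:desc} into a bound on a running sum of expected squared gradients, and then to exploit the Robbins--Monro properties of the prescribed step sizes. Because $\lambda_t\equiv\lambda$ and $\alpha^{\max}$ are now fixed, the stochastic-noise term in~\eqref{eq:desc} takes the clean form $C\alpha_t^2$ with constant $C\coloneqq 3\sigmag^2/(10b\lambda\alpha^{\max})$ independent of $t$, and the step size $\alpha_t=\alpha_0/(t+1)^a$ is deterministic. First I would take total expectations in~\eqref{eq:desc} via the tower property, which (since $\alpha_t$ is deterministic) gives
\begin{align*}
\EE[\calL_N(\bbw{t+1})] \le \EE[\calL_N(\bbw{t})] - \tfrac{\alpha_t}{2}\,\EE[\|\nabla\calL_N(\bbw{t})\|^2] + C\alpha_t^2 .
\end{align*}
One point to settle at the outset is that Lemma~\ref{thm:desc} requires $\alpha_t\le\alpha^{\max}$; as $\alpha_t$ is strictly decreasing with $\alpha_0<1$, this holds for every $t$ provided $\alpha_0\le 2^a\alpha^{\max}$ (otherwise, from some finite index onward), which does not affect the asymptotics.

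Next I would telescope this inequality over $t=1,\dots,T$ and use the lower bound $\calL_N^*\le\calL_N(\bbw{T+1})$ from Assumption~\ref{ass:regularity} to obtain
\begin{align*}
\tfrac12\sum_{t=1}^{T}\alpha_t\,\EE[\|\nabla\calL_N(\bbw{t})\|^2] \le \calL_N(\bbw{1}) - \calL_N^* + C\sum_{t=1}^{T}\alpha_t^2 .
\end{align*}
The chosen schedule with $\tfrac12<a<1$ satisfies $\sum_t\alpha_t^2<\infty$ (since $2a>1$) and $\sum_t\alpha_t=\infty$ (since $a<1$). The former bounds the right-hand side above by a finite constant $B$ uniformly in $T$, so the weighted sum of expected gradients stays below a $T$-independent cap.

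Finally, dividing by $\sum_{t=1}^{T}\alpha_t$ and using that this quantity diverges yields
\begin{align*}
\min_{1\le t\le T}\EE[\|\nabla\calL_N(\bbw{t})\|^2] \le \frac{\sum_{t=1}^{T}\alpha_t\,\EE[\|\nabla\calL_N(\bbw{t})\|^2]}{\sum_{t=1}^{T}\alpha_t} \le \frac{2B}{\sum_{t=1}^{T}\alpha_t}\longrightarrow 0
\end{align*}
as $T\to\infty$. Reading the middle expression as $\EE[\|\nabla\calL_N(\bbw{R_T})\|^2]$ for a random index $R_T\in\{1,\dots,T\}$ sampled with probability proportional to $\alpha_t$ gives precisely the claim that the expected loss gradient vanishes, and in particular $\liminf_{t\to\infty}\EE[\|\nabla\calL_N(\bbw{t})\|^2]=0$.

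I expect the argument to be essentially routine once Lemma~\ref{thm:desc} is in hand; the only genuine care needed is in passing correctly from the conditional to the unconditional inequality and in keeping the step-size cap $\alpha_t\le\alpha^{\max}$ valid at every summed iteration, so that~\eqref{eq:desc} applies throughout. The main obstacle, such as it is, is conceptual rather than technical: matching the somewhat informal phrase ``approaches $0$ in expectation'' to a precise mode of convergence, for which the weighted-average (random-index) statement above is the natural and standard choice.
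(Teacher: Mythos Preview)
Your argument is correct and follows essentially the same route as the paper: take total expectations in the descent inequality, telescope, use $\sum_t\alpha_t^2<\infty$, and divide by a diverging quantity. The only cosmetic difference is in the last step---the paper samples a uniform index and divides by $\alpha_{T-1}T=\alpha_0 T^{1-a}$, whereas you sample proportionally to $\alpha_t$ and divide by $\sum_t\alpha_t$; both are standard and give the same conclusion.
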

\begin{proof}
Following the proof of Lemma~\ref{thm:desc}, 
we fix $\alpha^{\text{max}} = 1/c_1$ in Algorithm~\ref{alg:armijo_ls}, 
with $c_1 = (5b^2\lambda_tL_1)/(3(b\lambda_t + \ubar{\kappa}\ubar{\sigma}^2)^2$, and take
the conditional expectation on both sides of \eqref{eq:desc}. 
This yields
\begin{align}
    \EE[\calL_N(\bbw{t+1})] &\le \EE[\calL_N(\bbw{t})] 
    - \frac{1}{2}\alpha_t\EE[\|\nabla \calL_N(\bbw{t})\|^2] + C\alpha_t^2,
    \label{eq:alt-2}
\end{align}
where $C:=\frac{3\sigmag^2 c_1}{10 b \lambda_t}$. 
Summing \eqref{eq:alt-2} over $t=0,1,\ldots,T-1$, we obtain
\begin{align}
    \EE[\calL_N(\bbw{T})] &\le \EE[\calL_N(\bbw{0})] 
    - \frac{1}{2}\sum_{t=0}^{T-1}\alpha_t\EE[\|\nabla \calL_N(\bbw{t})\|^2] + C\sum_{t=0}^{T-1}\alpha_t^2.
    \label{eq:alt-3}
\end{align}
Assuming that the learning rates are given by
$\alpha_t=\frac{\alpha_0}{(t+1)^a}$ for some $0<\alpha_0<1$, $\frac{1}{2}<a<1$, we have
\begin{align*}
    \sum_{t=0}^{T-1}\alpha_t^2 = D < \infty.
\end{align*}
Next, consider a random variable $\bb{z}_T$ satisfying $P[\bb{z}_T=\bbw{t}]=\frac{1}{T},$ for all $t$. Consequently,
\begin{align*}
    \EE[\|\nabla \calL_N(\bb{z}_T)\|^2] = \frac{1}{T}\sum_{t=0}^{T-1} \EE[\|\nabla \calL_N(\bbw{t})\|^2],
\end{align*}
such that, using also $\alpha_{t_2}\leq \alpha_{t_1}$ for all $t_2\geq t_1$,~\eqref{eq:alt-3} becomes
\begin{align*}
    \EE[\calL_N(\bbw{T})] 
     &\le \EE[\calL_N(\bbw{0})] 
    - \alpha_{T-1}\frac{1}{2}T\EE[\|\nabla \calL_N(\bb{z}_T)\|^2]+ CD.
\end{align*}
Then, using $\EE[\calL_N(\bbw{T})]\geq \calL_N^\star$ we have
\begin{align*}
    \EE[\|\nabla \calL_N(\bb{z}_T)\|^2] &\leq 2\frac{\EE[\calL_N(\bbw{0})] - \calL_N^*}{\alpha_{T-1}T} + \frac{2CD}{\alpha_{T-1}T} = 2\frac{\EE[\calL_N(\bbw{0})] - \calL_N^*}{\alpha_0T^{1-a}} + \frac{2CD}{\alpha_0T^{1-a}}.
\end{align*}
Taking the limit for $T\to\infty$ we finally have
\begin{align*}
    \lim_{T\to\infty}\EE[\|\nabla \calL_N(\bb{z}_T)\|^2] &\leq 0.
\end{align*}
\end{proof}

\section{Experiments}\label{sec:experiments}

We conduct a series of experiments to measure the 
performance of EGN on several supervised learning
and reinforcement learning tasks.

We select five baseline solvers:
SGD~\cite{robbins1951stochastic},
Adam~\cite{kingma2014adam},
SGD with momentum and Gradient Activation Function (GAF)~\cite{liu2021activated},
a Quasi-Newton solver (SQN)~\cite{bottou2018optimization, byrd2016stochastic, schraudolph2007stochastic},
and SGN~\cite{gargiani2020promise}.  
SGD acts as a most basic baseline with the computationally cheapest update; 
Adam is a widely used accelerated FOM for training DNNs;
GAF is a recent first-order variant 
that reports faster convergence on deep networks;
SGN is an \textit{inexact} Gauss-Newton solver against which we  
evaluate the practical advantages of solving the system~\eqref{eq:linear_system} \textit{exactly};
and SQN acts as an alternative to Gauss-Newton that approximates  
the Hessian via low-rank updates.
Since first-order methods typically require fewer computations 
per iterate, 
in order to obtain a fair comparison we monitor the wall time 
instead of the number of iterations.  
The learning rates are selected as the best performing 
$\alpha$ after a grid search in the logspace 
$\alpha \in [10^{-9}, 1]$. Additionally, for SGN
we search for the optimal ``number of CG iterations'' 
within the set $\{3, 5, 10, 20, 50\}$. For EGN 
we introduce two extra hyper-parameters: ``line search''
$\{ \text{True}, \text{False} \}$
and ``momentum'' $\{ 0.0, 0.9 \}$. The best performing 
sets of hyper-parameters 
as well as detailed description of the datasets
are available in Appendix~\ref{apx:experiments}.
The size of the mini-batch for all problems is $128$. 
All the experiments are conducted 
on the Tesla T4 GPU  
in the Google Colab environment 
with float32 precision.

\subsection{Supervised Learning}

For the regression task, we select three datasets: California Housing~\cite{ds_california_housing}, Superconductivity~\cite{ds_superconductivty}, and Diamonds~\cite{ds_diamonds}
with $20640$, $21263$, and $53940$ training samples, respectively. 
For classification, we use the IMDB Reviews dataset~\cite{ds_imdb}
containing $25000$ instances of movie reviews. 
Across all problems, 
the model $\Phi$
is a Feedforward Neural Network (FFNN) with three dense 
layers of $32$, $64$ and $32$ units followed by 
the ReLU activation function with a total of $4449$ parameters ($d=4449$).  
The loss function during the training is a least-squares 
loss~\eqref{eq:loss_mse} for regression 
and softmax cross-entropy loss~\eqref{eq:loss_ce} for classification. 
The datasets are split into training and test sets
in the proportion of $90/10$ percent. Numerical features 
are scaled and categorical features are 
one-hot encoded. 
To measure the performance, we plot
the evolution of the evaluation metric 
on unseen data (test set) with respect to 
wall time (in seconds). The evaluation metric 
is Root Mean Squared Error (RMSE) for regression
and accuracy for classification.

\begin{figure}[t] % h
    \includegraphics[width=0.7\textwidth]{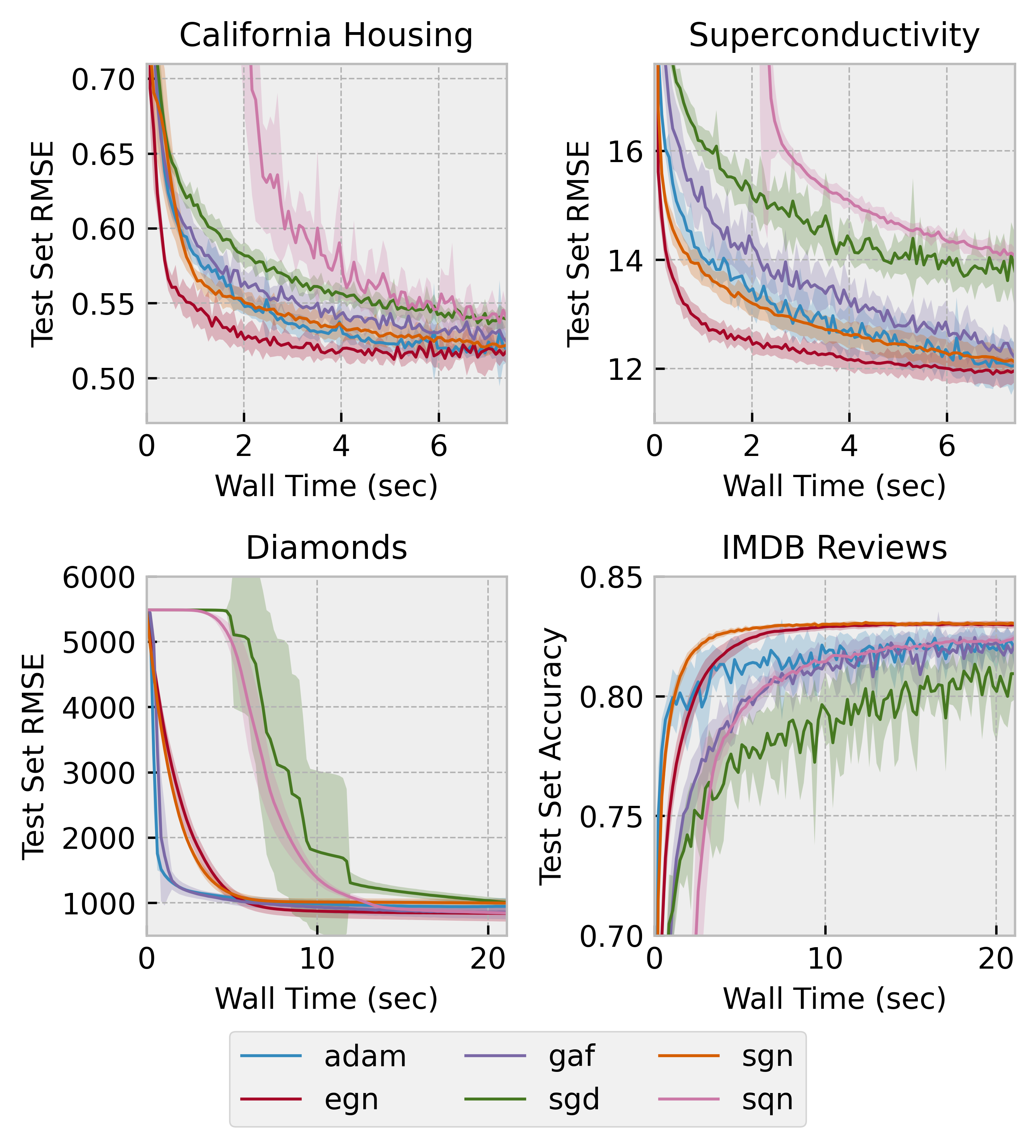}
    \centering
    \caption{Learning curves on the test set for
        SGD, Adam, GAF, SQN, SGN and EGN. 
        The shaded
        area represents $\pm1$ standard deviation 
        around the mean (thick line) for $10$ seeds.}
    \label{fig:sl}
\end{figure}

The results are presented in Figure~\ref{fig:sl} and Table~\ref{tab:perf_sl}.
On all but the IMDB Reviews dataset EGN has achieved 
both faster convergence and lower test set error than any other
optimization algorithm. On IMDB Reviews SGN is faster than EGN,
however, the two solvers achieve the same accuracy after 
reaching convergence.

\begin{table*}[t]
\centering
\caption{Performance after Training Completion (Supervised Learning)}
\label{tab:perf_sl}
\resizebox{\textwidth}{!}{
\begin{tabular}{@{}lcccc@{}}
\toprule
\textbf{Optimizer} & \textbf{California Housing} & \textbf{Superconduct} & \textbf{Diamonds} & \textbf{IMDB Reviews} \\ \midrule
SGD & 0.539 $\pm$ 0.006 & 13.788 $\pm$ 0.698 & 1008.936 $\pm$ 54.940 & 0.809 $\pm$ 0.011 \\
Adam & 0.519 $\pm$ 0.009 & 12.052 $\pm$ 0.381 & 947.258 $\pm$ 130.079 & 0.820 $\pm$ 0.008 \\
GAF & 0.524 $\pm$ 0.006 & 12.193 $\pm$ 0.293 & 857.817 $\pm$ 109.033 & 0.822 $\pm$ 0.006 \\
EGN & \textbf{0.518 $\pm$ 0.009} & \textbf{11.961 $\pm$ 0.207} & \textbf{840.500 $\pm$ 126.444} & \textbf{0.830 $\pm$ 0.001} \\
SGN & 0.522 $\pm$ 0.007 & 12.121 $\pm$ 0.196 & 998.688 $\pm$ 61.489 & \textbf{0.830 $\pm$ 0.001} \\
SQN & 0.539 $\pm$ 0.008 & 14.070 $\pm$ 0.141 & 844.951 $\pm$ 48.109 & 0.824 $\pm$ 0.001 \\
\bottomrule

\end{tabular}
}

\end{table*}

\subsection{Reinforcement Learning}

We demonstrate the application of EGN to reinforcement learning
in two scenarios: 
continuous action spaces with Linear-Quadratic Regulator (LQR) 
and discrete action spaces using Deep Q-Network (DQN)~\cite{mnih2013playing}. 

\paragraph{Learning LQR Controllers}

Given a discrete time-invariant linear 
system with continuous states and actions, 
and a quadratic reward function 
our task is to learn the optimal value function $v^{*}(s)$
and the optimal policy $\pi^{*}(s)$
such that we maximize the cumulative return.
Such problems can be solved in a data-driven fashion with
the policy iteration procedure~\cite{bradtke1994adaptive} 
(outlined in Appendix~\ref{apx:lqr}). 
It is well known that the optimal value function is quadratic 
and the optimal policy function is linear~\cite{hazan2022introduction}.
Consequently, we define $\Phi$ as a quadratic function of states and actions.
We track the norm of the difference between the 
optimal LQR controller calculated analytically knowing 
the system matrices and the learned weights of the model.

We select two linear systems from the 
Compleib set of benchmarks \cite{leibfritz2006compleib}. 
The first system is a deterministic 
model of a binary 
distillation tower (BDT) \cite{davison1990benchmark},
and the second one represents the 
linearized vertical
plane dynamics of an aircraft (UAV) with noise \cite{hung1982multivariable}.
The results are displayed in the top two charts 
of Figure~\ref{fig:rl}
and Table~\ref{tab:perf_rl}.
Both EGN and SGN outperform first-order methods
by a considerable margin,
with EGN enjoying slightly faster convergence in both cases,
while SGN achieves a marginally lower error on the
stochastic LQR upon reaching convergence.

\paragraph{Reinforcement Learning with DQN} 
Adopting the problem formulation of~\cite{mnih2013playing},
we aim to learn the weights of a neural network that represent
a $Q$-value function $q(s,a)$ that maps states $s$ 
and actions $a$ into scores ($Q$-values) for
a discrete set of actions.
Once training is complete, the optimal policy 
is formed by calculating the $Q$-value of each action and 
choosing the highest-scoring action.

\begin{figure}[t] %H t
    \includegraphics[width=0.7\textwidth]{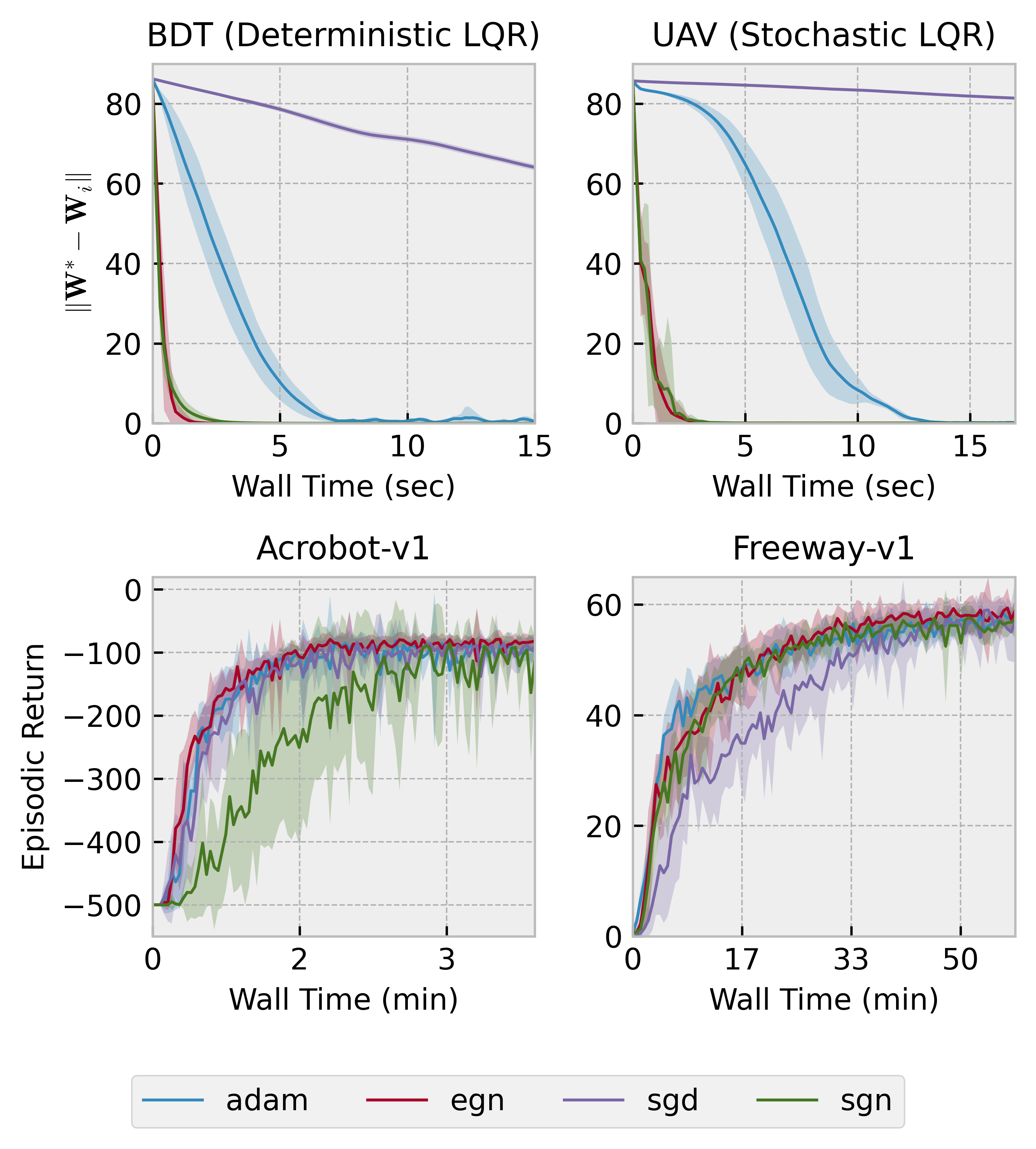}
    \centering
    \caption{Learning curves for SGD, Adam, SGN and EGN. The shaded area represents $\pm 1$ standard deviation around the mean return (thick line) for $10$ seeds.}
    \label{fig:rl}
\end{figure}

\begin{table*}[h]
\centering
\caption{Performance after Training Completion (Reinforcement Learning)}
\label{tab:perf_rl}
\resizebox{\textwidth}{!}{
\begin{tabular}{@{}lcccc@{}}
\toprule
\textbf{Optimizer} & \textbf{BDT} & \textbf{UAV} & \textbf{Acrobot-v1} & \textbf{Freeway-v1} \\ \midrule
SGD    & 64.058 $\pm$ 0.768 & 81.378 $\pm$ 0.306 & -90.962 $\pm$ 13.484 & 56.996 $\pm$ 7.359 \\
Adam   & 0.443 $\pm$ 0.297 & 0.157 $\pm$ 0.100 & -98.489 $\pm$ 20.827 & 55.967 $\pm$ 2.251 \\
EGN    & \textbf{0.000 $\pm$ 0.000} & 0.043 $\pm$ 0.031 & \textbf{-81.796 $\pm$ 11.778} & \textbf{58.916 $\pm$ 1.946} \\
SGN    & \textbf{0.000 $\pm$ 0.000} & \textbf{0.033 $\pm$ 0.019} & -114.052 $\pm$ 39.914 & 57.424 $\pm$ 3.611 \\ \bottomrule
\end{tabular}
}
\end{table*}

We build upon CleanRL~\cite{huang2021cleanrl} framework for running 
RL experiments, selecting two environments: Acrobot-v1 and Freeway-v1.
Acrobot-v1 is an OpenAI gym~\cite{openai_gym} environment 
with a $6$-dimensional state vector and a set of $3$ discrete actions
where the goal is to swing the free end of the connected joints 
above a given height in as few steps as possible.
Freeway-v1 is a MinAtar~\cite{young19minatar} environment that 
emulates the original Freeway Atari game. The state is 
represented by a $10 \times 10$ image
and there are $3$ discrete actions available. 
For Acrobot-v1 the network $\Phi$ is a FFNN 
with three dense layers of $32$, $64$ and $32$ units followed by 
the ReLU activation function with a total of $4515$ parameters.
For Freeway-v1 we design a compact CNN, 
comprising of a convolutional layer with 
sixteen $3 \times 3$ filters and ReLU activation, 
followed by flattening, a $64$-unit dense layer with ReLU, 
and a final dense layer outputting $Q$-values for 
all actions ($d=103683$).

The cumulative returns from each completed episode are 
recorded and 
displayed in the bottom two charts of Figure~\ref{fig:rl}.
The results for both Acrobot-v1 and Freeway-v1 show no distinct 
advantage among the solvers, 
as they all reach similar episodic returns. 
We notice, however, that EGN slightly outperforms 
other optimizers by achieving a higher return level 
at convergence (see Table~\ref{tab:perf_rl}).

\subsection{Limitations}\label{sec:limitations}

\paragraph{Explicit gradients} 
Unlike first-order methods that rely on the average 
gradient of the batch loss, 
Gauss-Newton methods require the full Jacobian matrix, which contains the gradients of each sample. 
As a result, backpropagation for EGN is more 
time-consuming than for FOMs.
Moreover, this can lead to increased GPU memory usage, 
especially with high-dimensional parameter vectors.

\paragraph{Large batch sizes} 

In our experiments we observed that 
the cost of computing the derivatives and the subsequent 
cost of computing the step 
(Algorithm~\ref{alg:slm_direction_finder}) 
are comparable.
However, for large batch sizes ($b>128$) we observed that the 
computational times increased 
significantly. 
Figure~\ref{fig:batch} quantifies how the 
direction calculation stage
begins to dominate the update time as the batch size grows.
While EGN remains an efficient drop-in replacement for
first-order methods within the commonly-used range
\(32\le b\le128\),
scaling to very large batches will require additional techniques. 
We suspect this to be related to hardware limitations
and leave a systematic study of such
mitigations to future work.

\begin{figure}[t] % h
    \centering
    \includegraphics[width=0.5\textwidth]{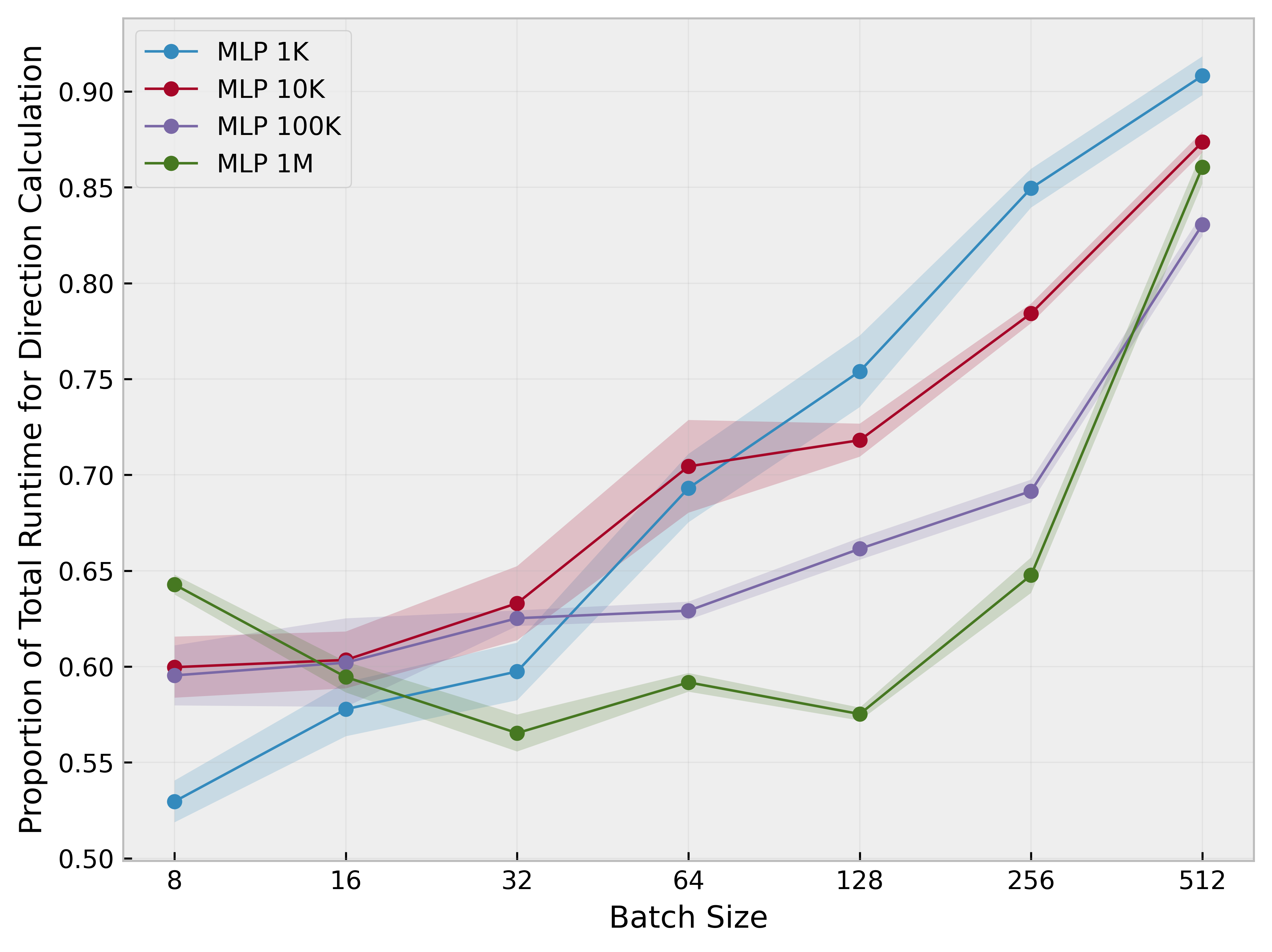}
    \caption{
Proportion of total update time spent in Algorithm~\ref{alg:slm_direction_finder} 
as a function of batch size for fully-connected neural networks (MLPs) of various sizes. 
Curves show the mean over 1000 runs; 
shaded regions denote $\pm 1$ standard deviation. 
Absolute wall-clock times are reported in Appendix~\ref{apx:limitations}.
}
    \label{fig:batch}
\end{figure}

\paragraph{Large number of classes for multi-class classification} By incorporating the softmax function
into the loss equation~\eqref{eq:loss_ce}
we introduce coupling between the 
individual outputs of $\Phi$ in the denominator 
$\sum_{j=1}^{c} e^{\mathbf{z}_{i,j}}$,
which makes
the Jacobian $\mathbf{J}$ 
a dense $bc \times d$ matrix.
Speeding up the calculation 
of $\mathbf{J}$ remains an open research question
and constitutes a major obstacle in using Gauss-Newton methods 
for tasks involving a large number of classes,
e.g., LLM pre-training.
One possible solution could consist in  moving the softmax function 
directly into the model $\Phi$ as the last layer of the network 
and then only computing the Jacobian with respect to the correct class, 
resulting in $\mathbf{J} \in \mathbb{R}^{b\times d}$. This approach, however, yields a Hessian approximation that becomes singular close to the solution, which results in numerical instabilities 
that hinder convergence~\cite{grosse2021taylor}.
Another possibility suggested in~\cite{ozyildirim2021levenberg} consists of
replacing the softmax cross-entropy loss with 
the multi-class hinge loss. 
Although the computation becomes faster for the hinge loss, 
empirical evidence shows that 
the test set accuracy upon training completion
is higher for the CE loss.
Finally, a promising approach 
is the Gauss-Newton-Bartlett (GNB) estimator proposed 
by~\cite{liu2023sophia}, which replaces the exact LM Hessian with an approximation obtained by sampling the subset of predicted labels. 

\section{Conclusion}

We presented the EGN algorithm, a stochastic second-order method
that efficiently estimates the descent direction by using a low-rank 
Gauss-Newton Hessian approximation and 
leveraging the Duncan-Guttman matrix identity.
We demonstrated that EGN solves 
the system $\mathbf{H}^{\text{LM}}\mathbf{d}=-\mathbf{g}$
exactly with less computational burden than 
other exact Gauss-Newton methods, as well as inexact   
methods that rely on conjugate gradient iterates. 
We also proved that under mild assumptions our algorithm converges in expectation.
Our empirical results show that EGN consistently matches or exceeds 
the generalization performance 
of well-tuned SGD, Adam, GAF, SQN, and SGN optimizers across various 
supervised and reinforcement learning tasks.

Future work will focus 
on addressing the shortcomings of EGN in classification problems
with a large number of classes. A promising direction is to approximate
the Gauss-Newton Hessian matrix to avoid computing the full 
Jacobian of the network, e.g., using techniques such as the Gauss-Newton-Bartlett estimator~\cite{liu2023sophia}.
Another direction is to study the performance of EGN on  
larger datasets and more complex models.

\section*{Acknowledgments}

This work was partially funded by the European Union 
(ERC Advanced Research Grant COMPACT, No. $101141351$). 
Views and opinions expressed are however those of the authors only and do not necessarily reflect those of the European Union or the European Research Council. 
Neither the European Union nor the granting authority can be held responsible for them.

%%%%%%%%%%%%%%%%%%%%%%%%%%%%%%%%%%%%%%%%%%%%%%%
% REFERENCES
%%%%%%%%%%%%%%%%%%%%%%%%%%%%%%%%%%%%%%%%%%%%%%%

\bibliographystyle{plain}

\bibliography{references}

\begin{thebibliography}{10}

\bibitem{adeoye2021sc}
Adeyemi~D Adeoye and Alberto Bemporad.
\newblock Sc-reg: Training overparameterized neural networks under
  self-concordant regularization.
\newblock Technical report, IMT School for Advanced Studies Lucca, 2021.

\bibitem{adeoye2023score}
Adeyemi~D Adeoye and Alberto Bemporad.
\newblock {SCORE}: approximating curvature information under self-concordant
  regularization.
\newblock {\em Computational Optimization and Applications}, 86(2):599--626,
  2023.

\bibitem{agarwal2017second}
Naman Agarwal, Brian Bullins, and Elad Hazan.
\newblock Second-order stochastic optimization for machine learning in linear
  time.
\newblock {\em The Journal of Machine Learning Research}, 18(1):4148--4187,
  2017.

\bibitem{amari1998natural}
Shun-Ichi Amari.
\newblock Natural gradient works efficiently in learning.
\newblock {\em Neural computation}, 10(2):251--276, 1998.

\bibitem{arbel2024rethinking}
Michael Arbel, Romain Menegaux, and Pierre Wolinski.
\newblock Rethinking {G}auss-{N}ewton for learning over-parameterized models.
\newblock {\em Advances in Neural Information Processing Systems}, 36, 2024.

\bibitem{berahas2016multi}
Albert~S Berahas, Jorge Nocedal, and Martin Tak{\'a}c.
\newblock A multi-batch {L-BFGS} method for machine learning.
\newblock {\em Advances in Neural Information Processing Systems}, 29, 2016.

\bibitem{bollapragada2019exact}
Raghu Bollapragada, Richard~H Byrd, and Jorge Nocedal.
\newblock Exact and inexact subsampled {N}ewton methods for optimization.
\newblock {\em IMA Journal of Numerical Analysis}, 39(2):545--578, 2019.

\bibitem{botev2017practical}
Aleksandar Botev, Hippolyt Ritter, and David Barber.
\newblock Practical {G}auss-{N}ewton optimisation for deep learning.
\newblock In {\em International Conference on Machine Learning}, pages
  557--565. PMLR, 2017.

\bibitem{bottou2018optimization}
L{\'e}on Bottou, Frank~E Curtis, and Jorge Nocedal.
\newblock Optimization methods for large-scale machine learning.
\newblock {\em SIAM review}, 60(2):223--311, 2018.

\bibitem{bradtke1996linear}
Steven~J Bradtke and Andrew~G Barto.
\newblock Linear least-squares algorithms for temporal difference learning.
\newblock {\em Machine learning}, 22(1):33--57, 1996.

\bibitem{bradtke1994adaptive}
Steven~J Bradtke, B~Erik Ydstie, and Andrew~G Barto.
\newblock Adaptive linear quadratic control using policy iteration.
\newblock In {\em Proceedings of 1994 American Control Conference-ACC'94},
  volume~3, pages 3475--3479. IEEE, 1994.

\bibitem{openai_gym}
Greg Brockman, Vicki Cheung, Ludwig Pettersson, Jonas Schneider, John Schulman,
  Jie Tang, and Wojciech Zaremba.
\newblock {OpenAI Gym}, 2016.

\bibitem{brown2020language}
Tom~B. Brown, Benjamin Mann, Nick Ryder, Melanie Subbiah, Jared Kaplan,
  Prafulla Dhariwal, Arvind Neelakantan, Pranav Shyam, Girish Sastry, Amanda
  Askell, Sandhini Agarwal, Ariel Herbert-Voss, Gretchen Krueger, Tom Henighan,
  Rewon Child, Aditya Ramesh, Daniel~M. Ziegler, Jeffrey Wu, Clemens Winter,
  Christopher Hesse, Mark Chen, Eric Sigler, Mateusz Litwin, Scott Gray,
  Benjamin Chess, Jack Clark, Christopher Berner, Sam McCandlish, Alec Radford,
  Ilya Sutskever, and Dario Amodei.
\newblock Language models are few-shot learners.
\newblock 2020.

\bibitem{brust2021nonlinear}
Johannes~J Brust.
\newblock Nonlinear least squares for large-scale machine learning using
  stochastic {J}acobian estimates.
\newblock {\em arXiv preprint arXiv:2107.05598}, 2021.

\bibitem{byrd2016stochastic}
Richard~H Byrd, Samantha~L Hansen, Jorge Nocedal, and Yoram Singer.
\newblock A stochastic quasi-{N}ewton method for large-scale optimization.
\newblock {\em SIAM Journal on Optimization}, 26(2):1008--1031, 2016.

\bibitem{chapelle2011improved}
Olivier Chapelle, Dumitru Erhan, et~al.
\newblock Improved preconditioner for {H}essian free optimization.
\newblock In {\em NIPS Workshop on Deep Learning and Unsupervised Feature
  Learning}, volume 201. Citeseer, 2011.

\bibitem{chen2022demon}
John Chen, Cameron Wolfe, Zhao Li, and Anastasios Kyrillidis.
\newblock Demon: improved neural network training with momentum decay.
\newblock In {\em ICASSP 2022-2022 IEEE International Conference on Acoustics,
  Speech and Signal Processing (ICASSP)}, pages 3958--3962. IEEE, 2022.

\bibitem{curtis2020adaptive}
Frank~E Curtis and Katya Scheinberg.
\newblock Adaptive stochastic optimization: A framework for analyzing
  stochastic optimization algorithms.
\newblock {\em IEEE Signal Processing Magazine}, 37(5):32--42, 2020.

\bibitem{davison1990benchmark}
Edward~J Davison.
\newblock Benchmark problems for control system design.
\newblock {\em Report of the IFAC Theory Committee}, 1990.

\bibitem{defazio2014saga}
Aaron Defazio, Francis Bach, and Simon Lacoste-Julien.
\newblock Saga: A fast incremental gradient method with support for
  non-strongly convex composite objectives.
\newblock {\em Advances in neural information processing systems}, 27, 2014.

\bibitem{defazio2019ineffectiveness}
Aaron Defazio and L{\'e}on Bottou.
\newblock On the ineffectiveness of variance reduced optimization for deep
  learning.
\newblock {\em Advances in Neural Information Processing Systems}, 32, 2019.

\bibitem{doikov2023second}
Nikita Doikov, Martin Jaggi, et~al.
\newblock Second-order optimization with lazy {H}essians.
\newblock In {\em International Conference on Machine Learning}, pages
  8138--8161. PMLR, 2023.

\bibitem{dozat2016incorporating}
Timothy Dozat.
\newblock Incorporating {N}esterov momentum into {A}dam.
\newblock 2016.

\bibitem{dubey2024llama}
Abhimanyu Dubey, Abhinav Jauhri, Abhinav Pandey, Abhishek Kadian, Ahmad
  Al-Dahle, Aiesha Letman, Akhil Mathur, Alan Schelten, Amy Yang, Angela Fan,
  et~al.
\newblock The llama 3 herd of models.
\newblock {\em arXiv preprint arXiv:2407.21783}, 2024.

\bibitem{duchi2011adaptive}
John Duchi, Elad Hazan, and Yoram Singer.
\newblock Adaptive subgradient methods for online learning and stochastic
  optimization.
\newblock {\em Journal of machine learning research}, 12(7), 2011.

\bibitem{duncan1944lxxviii}
William~Jolly Duncan.
\newblock Lxxviii. some devices for the solution of large sets of simultaneous
  linear equations: With an appendix on the reciprocation of partitioned
  matrices.
\newblock {\em The London, Edinburgh, and Dublin Philosophical Magazine and
  Journal of Science}, 35(249):660--670, 1944.

\bibitem{gargiani2020promise}
Matilde Gargiani, Andrea Zanelli, Moritz Diehl, and Frank Hutter.
\newblock On the promise of the stochastic generalized {G}auss-{N}ewton method
  for training {DNNs}.
\newblock {\em arXiv preprint arXiv:2006.02409}, 2020.

\bibitem{ghadimi2013stochastic}
Saeed Ghadimi and Guanghui Lan.
\newblock Stochastic first-and zeroth-order methods for nonconvex stochastic
  programming.
\newblock {\em SIAM journal on optimization}, 23(4):2341--2368, 2013.

\bibitem{gower2020variance}
Robert~M Gower, Mark Schmidt, Francis Bach, and Peter Richt{\'a}rik.
\newblock Variance-reduced methods for machine learning.
\newblock {\em Proceedings of the IEEE}, 108(11):1968--1983, 2020.

\bibitem{grosse2021taylor}
Roger Grosse.
\newblock Taylor approximations.
\newblock {\em Neural Network Training Dynamics. Lecture Notes, University of
  Toronto}, 2021.

\bibitem{guttman1946enlargement}
Louis Guttman.
\newblock Enlargement methods for computing the inverse matrix.
\newblock {\em The annals of mathematical statistics}, pages 336--343, 1946.

\bibitem{ds_superconductivty}
Kam Hamidieh.
\newblock {Superconductivty Data}.
\newblock UCI Machine Learning Repository, 2018.
\newblock {DOI}: https://doi.org/10.24432/C53P47.

\bibitem{hazan2022introduction}
Elad Hazan and Karan Singh.
\newblock Introduction to online nonstochastic control.
\newblock {\em arXiv preprint arXiv:2211.09619}, 2022.

\bibitem{he2016deep}
Kaiming He, Xiangyu Zhang, Shaoqing Ren, and Jian Sun.
\newblock Deep residual learning for image recognition.
\newblock In {\em Proceedings of the IEEE conference on computer vision and
  pattern recognition}, pages 770--778, 2016.

\bibitem{higham2002accuracy}
Nicholas~J Higham.
\newblock {\em Accuracy and stability of numerical algorithms}.
\newblock SIAM, 2002.

\bibitem{hinton2012neural}
Geoffrey Hinton, Nitish Srivastava, and Kevin Swersky.
\newblock Neural networks for machine learning. {L}ecture 6a overview of
  mini-batch gradient descent.
\newblock {\em Cited on}, 14(8):2, 2012.

\bibitem{hong2020stochastic}
Yuxi Hong, Houcine Bergou, Nicolas Doucet, Hao Zhang, Jesse Cranney, Hatem
  Ltaief, Damien Gratadour, Francois Rigaut, and David~E Keyes.
\newblock Stochastic {L}evenberg-{M}arquardt for solving optimization problems
  on hardware accelerators.
\newblock Submitted to IEEE, 2020.

\bibitem{honnibal2020spacy}
Matthew Honnibal, Ines Montani, Sofie Van~Landeghem, Adriane Boyd, et~al.
\newblock {spaCy}: Industrial-strength natural language processing in {P}ython.
\newblock 2020.

\bibitem{huang2021cleanrl}
Shengyi Huang, Rousslan Fernand~Julien Dossa, Chang Ye, and Jeff Braga.
\newblock Cleanrl: High-quality single-file implementations of deep
  reinforcement learning algorithms.
\newblock {\em arXiv preprint arXiv:2111.08819}, 2021.

\bibitem{hung1982multivariable}
YS~Hung and AGJ MacFarlane.
\newblock Multivariable control: A quasiclassical approach, 1982.

\bibitem{johnson2013accelerating}
Rie Johnson and Tong Zhang.
\newblock Accelerating stochastic gradient descent using predictive variance
  reduction.
\newblock {\em Advances in neural information processing systems}, 26, 2013.

\bibitem{kingma2014adam}
Diederik~P Kingma and Jimmy Ba.
\newblock Adam: A method for stochastic optimization.
\newblock {\em arXiv preprint arXiv:1412.6980}, 2014.

\bibitem{kiros2013training}
Ryan Kiros.
\newblock Training neural networks with stochastic {H}essian-free optimization.
\newblock {\em arXiv preprint arXiv:1301.3641}, 2013.

\bibitem{kunstner2019limitations}
Frederik Kunstner, Philipp Hennig, and Lukas Balles.
\newblock Limitations of the empirical {F}isher approximation for natural
  gradient descent.
\newblock {\em Advances in neural information processing systems}, 32, 2019.

\bibitem{leibfritz2006compleib}
F~Leibfritz.
\newblock Compleib: Constrained matrix optimization problem library, 2006.

\bibitem{hf-datasets}
Quentin Lhoest, Albert Villanova~del Moral, Yacine Jernite, Abhishek Thakur,
  Patrick von Platen, Suraj Patil, Julien Chaumond, Mariama Drame, Julien Plu,
  Lewis Tunstall, Joe Davison, Mario {\v{S}}a{\v{s}}ko, Gunjan Chhablani,
  Bhavitvya Malik, Simon Brandeis, Teven Le~Scao, Victor Sanh, Canwen Xu,
  Nicolas Patry, Angelina McMillan-Major, Philipp Schmid, Sylvain Gugger,
  Cl{\'e}ment Delangue, Th{\'e}o Matussi{\`e}re, Lysandre Debut, Stas Bekman,
  Pierric Cistac, Thibault Goehringer, Victor Mustar, Fran{\c{c}}ois Lagunas,
  Alexander Rush, and Thomas Wolf.
\newblock Datasets: A community library for natural language processing.
\newblock In {\em Proceedings of the 2021 Conference on Empirical Methods in
  Natural Language Processing: System Demonstrations}, pages 175--184, Online
  and Punta Cana, Dominican Republic, November 2021. Association for
  Computational Linguistics.

\bibitem{liu2022quasi}
Chengchang Liu and Luo Luo.
\newblock Quasi-{N}ewton methods for saddle point problems.
\newblock {\em Advances in Neural Information Processing Systems},
  35:3975--3987, 2022.

\bibitem{liu2023sophia}
Hong Liu, Zhiyuan Li, David Hall, Percy Liang, and Tengyu Ma.
\newblock Sophia: A scalable stochastic second-order optimizer for language
  model pre-training.
\newblock {\em arXiv preprint arXiv:2305.14342}, 2023.

\bibitem{liu2021activated}
Mei Liu, Liangming Chen, Xiaohao Du, Long Jin, and Mingsheng Shang.
\newblock Activated gradients for deep neural networks.
\newblock {\em IEEE Transactions on Neural Networks and Learning Systems},
  34(4):2156--2168, 2021.

\bibitem{loshchilov2017decoupled}
Ilya Loshchilov and Frank Hutter.
\newblock Decoupled weight decay regularization.
\newblock {\em arXiv preprint arXiv:1711.05101}, 2017.

\bibitem{lucas2018aggregated}
James Lucas, Shengyang Sun, Richard Zemel, and Roger Grosse.
\newblock Aggregated momentum: Stability through passive damping.
\newblock {\em arXiv preprint arXiv:1804.00325}, 2018.

\bibitem{ds_imdb}
Andrew~L. Maas, Raymond~E. Daly, Peter~T. Pham, Dan Huang, Andrew~Y. Ng, and
  Christopher Potts.
\newblock Learning word vectors for sentiment analysis.
\newblock In {\em Proceedings of the 49th Annual Meeting of the Association for
  Computational Linguistics: Human Language Technologies}, pages 142--150,
  Portland, Oregon, USA, June 2011. Association for Computational Linguistics.

\bibitem{martens2010deep}
James Martens et~al.
\newblock Deep learning via {H}essian-free optimization.
\newblock In {\em ICML}, volume~27, pages 735--742, 2010.

\bibitem{martens2011learning}
James Martens and Ilya Sutskever.
\newblock Learning recurrent neural networks with {H}essian-free optimization.
\newblock In {\em Proceedings of the 28th international conference on machine
  learning (ICML-11)}, pages 1033--1040, 2011.

\bibitem{mnih2013playing}
Volodymyr Mnih, Koray Kavukcuoglu, David Silver, Alex Graves, Ioannis
  Antonoglou, Daan Wierstra, and Martin Riedmiller.
\newblock Playing {A}tari with deep reinforcement learning.
\newblock {\em arXiv preprint arXiv:1312.5602}, 2013.

\bibitem{pml1Book}
Kevin~P. Murphy.
\newblock {\em Probabilistic Machine Learning: An introduction}.
\newblock MIT Press, 2022.

\bibitem{naumov2019deep}
Maxim Naumov, Dheevatsa Mudigere, Hao-Jun~Michael Shi, Jianyu Huang, Narayanan
  Sundaraman, Jongsoo Park, Xiaodong Wang, Udit Gupta, Carole-Jean Wu,
  Alisson~G Azzolini, et~al.
\newblock Deep learning recommendation model for personalization and
  recommendation systems.
\newblock {\em arXiv preprint arXiv:1906.00091}, 2019.

\bibitem{nesterov1983method}
Yurii Nesterov.
\newblock A method of solving a convex programming problem with convergence
  rate ${O}(1/k^2)$.
\newblock {\em Doklady Akademii Nauk SSSR}, 269(3):543, 1983.

\bibitem{nguyen2017sarah}
Lam~M Nguyen, Jie Liu, Katya Scheinberg, and Martin Tak{\'a}{\v{c}}.
\newblock Sarah: A novel method for machine learning problems using stochastic
  recursive gradient.
\newblock In {\em International conference on machine learning}, pages
  2613--2621. PMLR, 2017.

\bibitem{nocedal2006numerical}
J.~Nocedal and S.~Wright.
\newblock {\em Numerical Optimization}.
\newblock Springer Series in Operations Research and Financial Engineering.
  Springer New York, 2006.

\bibitem{ozyildirim2021levenberg}
Buse~Melis Ozyildirim and Mariam Kiran.
\newblock Levenberg--{M}arquardt multi-classification using hinge loss
  function.
\newblock {\em Neural Networks}, 143:564--571, 2021.

\bibitem{ds_california_housing}
R~Kelley Pace and Ronald Barry.
\newblock Sparse spatial autoregressions.
\newblock {\em Statistics \& Probability Letters}, 33(3):291--297, 1997.

\bibitem{papyan2018full}
Vardan Papyan.
\newblock The full spectrum of deep net {H}essians at scale: Dynamics with
  sample size.
\newblock {\em arXiv preprint arXiv:1811.07062}, 2018.

\bibitem{paquette2020stochastic}
Courtney Paquette and Katya Scheinberg.
\newblock A stochastic line search method with expected complexity analysis.
\newblock {\em SIAM Journal on Optimization}, 30(1):349--376, 2020.

\bibitem{scikit-learn}
F.~Pedregosa, G.~Varoquaux, A.~Gramfort, V.~Michel, B.~Thirion, O.~Grisel,
  M.~Blondel, P.~Prettenhofer, R.~Weiss, V.~Dubourg, J.~Vanderplas, A.~Passos,
  D.~Cournapeau, M.~Brucher, M.~Perrot, and E.~Duchesnay.
\newblock Scikit-learn: Machine learning in {P}ython.
\newblock {\em Journal of Machine Learning Research}, 12:2825--2830, 2011.

\bibitem{pooladzandi2022improving}
Omead Pooladzandi and Yiming Zhou.
\newblock Improving {L}evenberg-{M}arquardt algorithm for neural networks.
\newblock {\em arXiv preprint arXiv:2212.08769}, 2022.

\bibitem{ren2019efficient}
Yi~Ren and Donald Goldfarb.
\newblock Efficient subsampled gauss-newton and natural gradient methods for
  training neural networks.
\newblock {\em arXiv preprint arXiv:1906.02353}, 2019.

\bibitem{robbins1951stochastic}
Herbert Robbins and Sutton Monro.
\newblock A stochastic approximation method.
\newblock {\em The annals of mathematical statistics}, pages 400--407, 1951.

\bibitem{sagun2017empirical}
Levent Sagun, Utku Evci, V~Ugur Guney, Yann Dauphin, and Leon Bottou.
\newblock Empirical analysis of the hessian of over-parametrized neural
  networks.
\newblock {\em arXiv preprint arXiv:1706.04454}, 2017.

\bibitem{sankar2021deeper}
Adepu~Ravi Sankar, Yash Khasbage, Rahul Vigneswaran, and Vineeth~N
  Balasubramanian.
\newblock A deeper look at the {H}essian eigenspectrum of deep neural networks
  and its applications to regularization.
\newblock In {\em Proceedings of the AAAI Conference on Artificial
  Intelligence}, volume~35, pages 9481--9488, 2021.

\bibitem{schraudolph2002fast}
Nicol~N Schraudolph.
\newblock Fast curvature matrix-vector products for second-order gradient
  descent.
\newblock {\em Neural computation}, 14(7):1723--1738, 2002.

\bibitem{schraudolph2007stochastic}
Nicol~N Schraudolph, Jin Yu, and Simon G{\"u}nter.
\newblock A stochastic quasi-newton method for online convex optimization.
\newblock In {\em Artificial intelligence and statistics}, pages 436--443.
  PMLR, 2007.

\bibitem{sun2019survey}
Shiliang Sun, Zehui Cao, Han Zhu, and Jing Zhao.
\newblock A survey of optimization methods from a machine learning perspective.
\newblock {\em IEEE transactions on cybernetics}, 50(8):3668--3681, 2019.

\bibitem{sutskever2013importance}
Ilya Sutskever, James Martens, George Dahl, and Geoffrey Hinton.
\newblock On the importance of initialization and momentum in deep learning.
\newblock In {\em International conference on machine learning}, pages
  1139--1147. PMLR, 2013.

\bibitem{tfds}
TensorFlow.
\newblock Tensorflow datasets, a collection of ready-to-use datasets.
\newblock \url{https://www.tensorflow.org/datasets}.

\bibitem{touvron2023llama}
Hugo Touvron, Thibaut Lavril, Gautier Izacard, Xavier Martinet, Marie-Anne
  Lachaux, Timoth{\'e}e Lacroix, Baptiste Rozi{\`e}re, Naman Goyal, Eric
  Hambro, Faisal Azhar, et~al.
\newblock Llama: Open and efficient foundation language models.
\newblock {\em arXiv preprint arXiv:2302.13971}, 2023.

\bibitem{tran2020stochastic}
Quoc Tran-Dinh, Nhan Pham, and Lam Nguyen.
\newblock Stochastic {G}auss-{N}ewton algorithms for nonconvex compositional
  optimization.
\newblock In {\em International Conference on Machine Learning}, pages
  9572--9582. PMLR, 2020.

\bibitem{vaswani2017attention}
Ashish Vaswani, Noam Shazeer, Niki Parmar, Jakob Uszkoreit, Llion Jones,
  Aidan~N Gomez, {\L}ukasz Kaiser, and Illia Polosukhin.
\newblock Attention is all you need.
\newblock {\em Advances in neural information processing systems}, 30, 2017.

\bibitem{vaswani2019painless}
Sharan Vaswani, Aaron Mishkin, Issam Laradji, Mark Schmidt, Gauthier Gidel, and
  Simon Lacoste-Julien.
\newblock Painless stochastic gradient: Interpolation, line-search, and
  convergence rates.
\newblock {\em Advances in neural information processing systems}, 32, 2019.

\bibitem{vinyals2012krylov}
Oriol Vinyals and Daniel Povey.
\newblock Krylov subspace descent for deep learning.
\newblock In {\em Artificial intelligence and statistics}, pages 1261--1268.
  PMLR, 2012.

\bibitem{ds_diamonds}
Hadley Wickham.
\newblock {\em ggplot2: Elegant Graphics for Data Analysis}.
\newblock Springer-Verlag New York, 2016.

\bibitem{wills2021stochastic}
Adrian~G Wills and Thomas~B Sch{\"o}n.
\newblock Stochastic quasi-{N}ewton with line-search regularisation.
\newblock {\em Automatica}, 127:109503, 2021.

\bibitem{grok1}
{xai-org}.
\newblock Grok-1.
\newblock \url{https://github.com/xai-org/grok-1}, 2024.
\newblock GitHub repository.

\bibitem{xu2020second}
Peng Xu, Fred Roosta, and Michael~W Mahoney.
\newblock Second-order optimization for non-convex machine learning: An
  empirical study.
\newblock In {\em Proceedings of the 2020 SIAM International Conference on Data
  Mining}, pages 199--207. SIAM, 2020.

\bibitem{yao2021adahessian}
Zhewei Yao, Amir Gholami, Sheng Shen, Mustafa Mustafa, Kurt Keutzer, and
  Michael Mahoney.
\newblock Adahessian: An adaptive second order optimizer for machine learning.
\newblock In {\em proceedings of the AAAI conference on artificial
  intelligence}, volume~35, pages 10665--10673, 2021.

\bibitem{young19minatar}
Kenny {Young} and Tian {Tian}.
\newblock Minatar: An {A}tari-inspired testbed for thorough and reproducible
  reinforcement learning experiments.
\newblock {\em arXiv preprint arXiv:1903.03176}, 2019.

\bibitem{zaheer2018adaptive}
Manzil Zaheer, Sashank Reddi, Devendra Sachan, Satyen Kale, and Sanjiv Kumar.
\newblock Adaptive methods for nonconvex optimization.
\newblock {\em Advances in neural information processing systems}, 31, 2018.

\bibitem{zeiler2012adadelta}
Matthew~D Zeiler.
\newblock Adadelta: an adaptive learning rate method.
\newblock {\em arXiv preprint arXiv:1212.5701}, 2012.

\bibitem{zhang2023optimal}
Fangzhao Zhang and Mert Pilanci.
\newblock Optimal shrinkage for distributed second-order optimization.
\newblock In {\em International Conference on Machine Learning}, pages
  41523--41549. PMLR, 2023.

\end{thebibliography}

%%%%%%%%%%%%%%%%%%%%%%%%%%%%%%%%%%%%%%%%%%%%%%%
% APPENDIX
%%%%%%%%%%%%%%%%%%%%%%%%%%%%%%%%%%%%%%%%%%%%%%%

\newpage

\appendix

\section{Proofs and Derivations}\label{apx:derivations}

\subsection{Generalized Gauss-Newton Hessian Approximation}\label{apx:ggn}

\paragraph{Claim} The 
generalized Gauss-Newton Hessian approximation
scheme for the batch loss~\eqref{eq:batch_loss} is
$\mathbf{H}^{\mathrm{GN}}=\frac{1}{b}\mathbf{J}^{\top}\mathbf{Q}\mathbf{J}$.

\begin{proof}
The derivative of the generic batch loss function~\eqref{eq:batch_loss}
reads
\begin{align*}
    \frac{\partial}{\partial\mathbf{w}}\mathcal{L}_{b}=\frac{\partial}{\partial\mathbf{w}}\left[\frac{1}{b}\sum_{i=1}^{b}\ell\left(\mathbf{y}_{i},\Phi(\mathbf{x}_{i};\mathbf{w})\right)\right]
\end{align*}
Employing the chain rule $\frac{\partial\ell}{\partial\mathbf{w}}=\frac{\partial\ell}{\partial \Phi }\frac{\partial \Phi }{\partial\mathbf{w}}$
we have
\begin{align*}
\frac{\partial}{\partial\mathbf{w}}\mathcal{L}_{b}=\frac{1}{b}\sum_{i=1}^{b}\frac{\partial\ell\left(\mathbf{y}_{i},\Phi(\mathbf{x}_{i};\mathbf{w})\right)}{\partial\Phi}\frac{\partial\Phi(\mathbf{x}_{i};\mathbf{w})}{\partial\mathbf{w}}=\frac{1}{b}\sum_{i=1}^{b}\mathbf{L}_{i}\mathbf{J}_{\Phi_{i}},
\end{align*}
where
$\mathbf{L}_{i}=\frac{\partial\ell\left(\mathbf{y}_{i},\Phi(\mathbf{x}_{i};\mathbf{w})\right)}{\partial \Phi }\in\rr^{1\times c}$ 
and $\mathbf{J}_{\Phi_i}=\frac{\partial \Phi(\mathbf{x}_{i};\mathbf{w})}{\partial\mathbf{w}}\in\rr^{c\times d}$.
We obtain the Hessian by differentiating the gradient
\begin{align*}
\mathbf{H}=\frac{\partial}{\partial\mathbf{w}}\left[\frac{1}{b}\sum_{i=1}^{b}\mathbf{L}_{i}\mathbf{J}_{\Phi_{i}}\right].
\end{align*}
Using the product rule we arrive at
\begin{align*}
\mathbf{H}=\frac{1}{b}\sum_{i=1}^{b}\left(\frac{\partial}{\partial\mathbf{w}}\left[\mathbf{L}_{i}\right]\mathbf{J}_{\Phi_{i}}+\sum_{k=1}^{c}\mathbf{L}_{i,k}\frac{\partial}{\partial\mathbf{w}}\left[\mathbf{J}_{\Phi_{i,k}}\right]\right),
\end{align*}
where 
$\mathbf{L}_{i,k} \in \rr$ 
is the $k$-th element of the derivative of the 
loss with respect to $\Phi$,
and $\mathbf{J}_{\Phi_{i,k}} \in \rr^{1 \times d}$ is the $k$-th row
the Jacobian $\mathbf{J}_{\Phi_{i}}$.
We obtain the Gauss-Newton Hessian approximation 
by neglecting the second-term 
of $\mathbf{H}$~\cite{papyan2018full, sagun2017empirical}, 
such that
\begin{align*}
    \mathbf{H}^{\mathrm{GN}}=\frac{1}{b}\sum_{i=1}^{b}\frac{\partial}{\partial\mathbf{w}}\left[\mathbf{L}_{i}\right]\mathbf{J}_{\Phi_{i}}.
\end{align*}
Given that
\begin{align*}
    \frac{\partial}{\partial\mathbf{w}}\left[\mathbf{L}_{i}\right]=\mathbf{J}_{\Phi_i}^{\top}\frac{\partial^{2}\ell\left(\mathbf{y}_{i},\Phi(\mathbf{x}_{i};\mathbf{w})\right)}{\partial \Phi ^{2}}=\mathbf{J}_{\Phi_i}^{\top}\mathbf{Q}_{\ell_i},
\end{align*}
where $\mathbf{Q}_{\ell_i}=\frac{\partial^{2}\ell(\mathbf{y}_{i},\Phi(\mathbf{x}_{i};\mathbf{w}))}{\partial \Phi ^{2}}\in\rr^{c\times c}$ is the second derivative of the loss with respect to the function's output,
we have
\begin{align*}
\mathbf{H}^{\mathrm{GN}}=\frac{1}{b}\sum_{i=1}^{b}\mathbf{J}_{\Phi_{i}}^{\top}\mathbf{Q}_{\ell_{i}}\mathbf{J}_{\Phi_{i}}.
\end{align*}
Or using the compact notation
\begin{align*}
    \mathbf{H}^{\mathrm{GN}}=\frac{1}{b}\mathbf{J}^{\top}\mathbf{Q}\mathbf{J},
\end{align*}
where we vertically stack individual Jacobians $\mathbf{J}_{\Phi_i}$
for each sample in the batch $\mathcal{B}$ to form
$\mathbf{J} \in \rr^{bc \times d}$
and
form a block diagonal matrix
$\mathbf{Q} = \text{blkdiag}(\mathbf{Q}_{\ell_1}, \mathbf{Q}_{\ell_2}, \ldots, \mathbf{Q}_{\ell_b}) \in \rr^{bc \times bc}$.
\end{proof}

\subsection{Gauss-Newton Hessian of the MSE Loss Function}\label{apx:mse}

\paragraph{Claim}
The gradient of the batch MSE loss is 
$\mathbf{g}=\frac{1}{b}\mathbf{J}^{\top}\mathbf{r}$.

\begin{proof}
We define the residual vector $\mathbf{r}$ as 
$\mathbf{r}:=\left[\begin{array}{ccc}
\Phi(\mathbf{x}_{1};\mathbf{w})-\mathbf{y}_{1} & \dots & \Phi(\mathbf{x}_{b};\mathbf{w})-\mathbf{y}_{b}\end{array}\right]^{\top}$. 
Recall that the stacked Jacobians of the neural network for 
are denoted by $\mathbf{J}$,
with $\mathbf{J}\in\rr^{b\times d}$ for the regression task. 
The batch loss~\eqref{eq:batch_loss} for MSE is
\begin{align*}
    \mathcal{L}_{b}\left(\mathbf{w}\right)=\frac{1}{2b}\sum_{i=1}^{b}\left(\Phi(\mathbf{x}_{i};\mathbf{w})-\mathbf{y}_{i}\right)^{2}=\frac{1}{2b}\mathbf{r}^{\top}\mathbf{r}.
\end{align*}
So that
\begin{align*}
\frac{\partial}{\partial\mathbf{w}}\left[\mathcal{L}_{b}\right]=\frac{\partial\mathcal{L}_{b}}{\partial\mathbf{r}}\frac{\partial\mathbf{r}}{\partial\Phi}\frac{\partial\Phi}{\partial\mathbf{w}}=\frac{1}{b}\mathbf{r}^{\top}\mathbf{J}.
\end{align*}
Since we define the gradient to be a column vector
\begin{align*}
    \mathbf{g}=\left(\frac{\partial\mathcal{L}_{b}}{\partial\mathbf{w}}\right)^{\top}=\left(\frac{1}{b}\mathbf{r}^{\top}\mathbf{J}\right)^{\top}=\frac{1}{b}\mathbf{J}^{\top}\mathbf{r}.
\end{align*}
\end{proof}

\paragraph{Claim}
The Hessian of the batch MSE loss is $\mathbf{H}^{\text{GN}}=\frac{1}{b}\mathbf{J}^{\top}\mathbf{J}$.

\begin{proof}
We obtain the Hessian by differentiating the gradient of the loss
\begin{align*}
    \mathbf{H}=\frac{\partial}{\partial\mathbf{w}}\mathbf{g}=\frac{1}{b}\frac{\partial}{\partial\mathbf{w}}\left(\mathbf{J}^{\top}\mathbf{r}\right).
\end{align*}
Using the product rule of vector calculus
\begin{align*}
    \mathbf{H} = \frac{1}{b}\left(\mathbf{J}^{\top}\frac{\partial}{\partial\mathbf{w}}\left[\mathbf{r}\right] 
    + \left(\frac{\partial}{\partial\mathbf{w}}\left[\mathbf{J}^{\top}\right]\right)\mathbf{r}\right) = \frac{1}{b}\left(\mathbf{J}^{\top}\mathbf{J} 
    + \sum_{i=1}^{b}\frac{\partial^{2}}{\partial\mathbf{w}^{2}}\left[\Phi(\mathbf{x}_{i};\mathbf{w})\right]\mathbf{r}_{i}\right).
\end{align*}
where $\mathbf{r}_{i} \in \rr$ is the $i$-th element of $\mathbf{r}$.

Neglecting the second term we obtain the Gauss-Newton approximation of the Hessian
\begin{align*}
\mathbf{H}^{\text{GN}}=\frac{1}{b}\mathbf{J}^{\top}\mathbf{J}.
\end{align*}
\end{proof}

\subsection{Gauss-Newton Hessian of the Multi-class Cross-entropy Loss Function}\label{apx:ce}

\paragraph{Claim}
The gradient of the batch multi-class cross-entropy loss is 
$\mathbf{g}=\frac{1}{b}\mathbf{J}^{\top}\mathbf{r}$.

\begin{proof}
We recall from~\ref{apx:ggn} that
the partial derivative of the generic loss function is
\begin{align*}
\frac{\partial}{\partial\mathbf{w}}\mathcal{L}_{b}=\frac{1}{b}\sum_{i=1}^{b}\frac{\partial\ell\left(\mathbf{y}_{i},\Phi(\mathbf{x}_{i};\mathbf{w})\right)}{\partial\Phi}\mathbf{J}_{\Phi_{i}}.
\end{align*}
Assuming that $\mathbf{J}_{\Phi_{i}}$ is calculated during the backpropagation stage,
we examine the term $\frac{\partial\ell\left(\mathbf{y}_{i},\Phi(\mathbf{x}_{i};\mathbf{w})\right)}{\partial\Phi}$.
The cross-entropy loss function~\eqref{eq:loss_ce}
is defined as
\begin{align*}
\ell\left(\mathbf{y}_{i},\Phi(\mathbf{x}_{i};\mathbf{w})\right)\coloneqq-\mathbf{y}_{i}^{\top}\log\left(\sigma\left(\Phi(\mathbf{x}_{i};\mathbf{w})\right)\right),
\end{align*}
where
$\sigma$ is the softmax function defined as
\begin{align*}
    \sigma(\mathbf{z}_{i,k}) = \frac{e^{\mathbf{z}_{i,k}}}{\sum_{j=1}^{c} e^{\mathbf{z}_{i,j}}},
\end{align*}
with a $c$-dimensional vector of prediction scores (logits)
$\mathbf{z}_{i}=\Phi(\mathbf{x}_{i};\mathbf{w})$
and 
a $c$-dimensional vector of probabilities
$\mathbf{p}_{i}=\sigma(\mathbf{z}_{i})$.
Applying the chain rule and using the shorthand notation we obtain 
\begin{align*}
\frac{\partial\ell\left(\mathbf{y}_{i},\Phi(\mathbf{x}_{i};\mathbf{w})\right)}{\partial\Phi}=\frac{\partial\ell}{\partial\mathbf{p}_{i}}\frac{\partial\mathbf{p}_{i}}{\partial\mathbf{z}_{i}}\frac{\partial\mathbf{z}_{i}}{\partial\Phi}.
\end{align*}
Differentiating the loss with respect to the softmax function yields
\begin{align*}
\frac{\partial\ell}{\partial\mathbf{p}_{i}}=\left[-\frac{\mathbf{y}_{i,1}}{\mathbf{p}_{i,1}},-\frac{\mathbf{y}_{i,2}}{\mathbf{p}_{i,2}},\ldots,-\frac{\mathbf{y}_{i,c}}{\mathbf{p}_{i,c}}\right].
\end{align*}
The derivative of the $k$-th element of the softmax function 
wrt the $l$-th input $\mathbf{z}_{i,l}$ is split in two cases
\begin{align*}
\frac{\partial\mathbf{p}_{i,k}}{\partial\mathbf{z}_{i,l}}=\begin{cases}
\mathbf{p}_{i,k}(1-\mathbf{p}_{i,k}), & \text{if }k=l\\
-\mathbf{p}_{i,k}\mathbf{p}_{i,l}, & \text{if }k\neq l.
\end{cases}
\end{align*}
So that the derivative 
$\frac{\partial\mathbf{p}_{i}}{\partial\mathbf{z}_{i}} \in \rr^{c \times c}$
can be structured as
\begin{align*}
\frac{\partial\mathbf{p}_{i}}{\partial\mathbf{z}_{i}}=\text{diag}(\mathbf{p}_{i})-\mathbf{p}_{i}\mathbf{p}_{i}^{\top}.    
\end{align*}
Combining $\frac{\partial\ell}{\partial\mathbf{p}_{i}}$ and
$\frac{\partial\mathbf{p}_{i}}{\partial\mathbf{z}_{i}}$, 
we obtain
\begin{align*}
\frac{\partial\ell}{\partial\mathbf{z}_{i,k}}=-\sum_{l=1}^{c}\left(\frac{\mathbf{y}_{i,l}}{\mathbf{p}_{i,l}}\right)\begin{cases}
\mathbf{p}_{i,k}(1-\mathbf{p}_{i,k}), & \text{if }k=l\\
-\mathbf{p}_{i,k}\mathbf{p}_{i,l}, & \text{if }k\neq l,
\end{cases}    
\end{align*}
Which simplifies to
\begin{align*}
\frac{\partial\ell}{\partial\mathbf{z}_{i,k}}=-\mathbf{y}_{i,k}+\mathbf{p}_{i,k}=\mathbf{p}_{i,k}-\mathbf{y}_{i,k}.    
\end{align*}
Since 
$\frac{\partial\mathbf{z}_{i}}{\partial\Phi}$ is the identity,
we have
\begin{align*}
    \frac{\partial\ell\left(\mathbf{y}_{i},\Phi(\mathbf{x}_{i};\mathbf{w})\right)}{\partial\Phi}=\mathbf{p}_{i}-\mathbf{y}_{i}.
\end{align*}

The same way as for the MSE loss,
we define (pseudo-)residuals $\mathbf{r}_i\in\rr^{c}$
as a column vector of 
probabilities minus the targets, i.e.,
$\left( \mathbf{p}_{i}-\mathbf{y}_{i} \right)^{\top}$.
Substituting back into the loss derivative
\begin{align*}
    \frac{\partial\mathcal{L}_{b}}{\partial\mathbf{w}}=\frac{1}{b}\sum_{i=1}^{b}\mathbf{r}^{\top}_{i}\mathbf{J}_{\Phi_i}.
\end{align*}
Since we define the gradient to be a column vector
\begin{align*}
    \mathbf{g}=\left(\frac{\partial\mathcal{L}_{b}}{\partial\mathbf{w}}\right)^{\top}=\left(\frac{1}{b}\sum_{i=1}^{b} \mathbf{r}^{\top}_i \mathbf{J}_{\Phi_i}\right)^{\top}=\frac{1}{b}\sum_{i=1}^{b}\mathbf{J}_{\Phi_i}^{\top}\mathbf{r}_{i}.
\end{align*}
Or in shorthand notation
\begin{align*}
    \mathbf{g}=\frac{1}{b}\mathbf{J}^{\top}\mathbf{r}
\end{align*}
where for each sample $i$ in the batch $\mathcal{B}$ 
we vertically stack 
Jacobians $\mathbf{J}_{\Phi_i}$
as well as residuals $\mathbf{r}_i$
to form
$\mathbf{J} \in \rr^{bc \times d}$
and 
$\mathbf{r} \in \rr^{bc}$.
\end{proof}

\paragraph{Claim}
The Hessian of the batch multi-class cross-entropy loss is 
$\mathbf{H}^{\text{GN}}=\frac{1}{b}\mathbf{J}^{\top}\mathbf{Q}\mathbf{J}$.

\begin{proof}
Recall that the GN Hessian 
for the generalized case (\ref{apx:ggn}) is  
\begin{align}
    \mathbf{H}^{\mathrm{GN}}=\frac{1}{b}\sum_{i=1}^{b}\mathbf{J}_{\Phi_i}^{\top}\mathbf{Q}_{\ell_i}\mathbf{J}_{\Phi_i}.
\end{align}
The second derivative of the loss with respect to the function's output
$\mathbf{Q}_{\ell_i}=\frac{\partial^{2}\ell(\mathbf{y}_{i},\Phi(\mathbf{x}_{i};\mathbf{w}))}{\partial \Phi ^{2}}\in \rr^{c\times c}$
for CE loss is
\begin{align*}
    \mathbf{Q}_{\ell_i}=\frac{\partial}{\partial \Phi }\left[\mathbf{p}_{i}-\mathbf{y}_{i}\right].
\end{align*}
Using the gradient of the softmax function derived earlier,
we form a symmetric matrix 
$\mathbf{Q}_{\ell_i}$ such that 
\begin{align*}
\mathbf{Q}_{\ell_{i}}=\left[\begin{array}{cccc}
\mathbf{p}_{i1}(1-\mathbf{p}_{i1}) & -\mathbf{p}_{i1}\mathbf{p}_{i2} & \ldots & -\mathbf{p}_{i1}\mathbf{p}_{ic}\\
-\mathbf{p}_{i2}\mathbf{p}_{i1} & \mathbf{p}_{i2}(1-\mathbf{p}_{i2}) & \ldots & -\mathbf{p}_{i2}\mathbf{p}_{ic}\\
\vdots & \vdots & \ddots & \vdots\\
-\mathbf{p}_{ic}\mathbf{p}_{i1} & -\mathbf{p}_{ic}\mathbf{p}_{i2} & \ldots & \mathbf{p}_{ic}(1-\mathbf{p}_{ic})
\end{array}\right],
\end{align*}
Or in compact notation
\begin{align*}
    \mathbf{H}^{\text{GN}}=\frac{1}{b}\mathbf{J}^{\top}\mathbf{Q}\mathbf{J},
\end{align*}
Where
$\mathbf{Q}$ is a block diagonal matrix
\begin{align*}
    \ensuremath{\mathbf{Q}}=\left[\begin{array}{cccc}
        \mathbf{Q}_{\ell_1} & 0 & 0 & 0\\
        0 & \mathbf{Q}_{\ell_2} & 0 & 0\\
        0 & 0 & \ddots & 0\\
        0 & 0 & 0 & \mathbf{Q}_{\ell_b}
    \end{array}\right]
\end{align*}
And 
$\mathbf{J} \in \rr^{bc \times d}$
is vertically stacked 
Jacobians $\mathbf{J}_{\Phi_i}$.
    
\end{proof}

\section{Algorithms}\label{apx:algo}

\begin{algorithm}[H]\label{alg:qr_slm}
	\caption{Calculate direction using QR factorization (MSE loss)}
	\begin{algorithmic}[1]
		\STATE {\bfseries Input:} (pseudo-)residuals $\mathbf{r}$,
        stacked Jacobians of the model $\mathbf{J}$,
        regularizer $\lambda$.
    
        \STATE Factorize $\mathbf{J}^{\top}$ with economy sized QR: $\mathbf{Q},\mathbf{R}\leftarrow\text{qr}\left(\mathbf{J}^{\top}\right)$

        \STATE Factorize: $\mathbf{\tilde{Q}},\mathbf{\tilde{R}}\leftarrow\text{qr}\left(\mathbf{R}\mathbf{R}^{\top}+\lambda\mathbf{I}\right)$
        
        \STATE Solve the linear system for $\delta$: $\mathbf{\tilde{R}}\delta=\mathbf{\tilde{Q}^{\top}}\mathbf{R}\mathbf{r}$

        \STATE Calculate $\mathbf{d}^{\mathrm{LM}}=-\mathbf{Q} \delta$

		\STATE \textbf{Return} $\mathbf{d}^{\mathrm{LM}}$
	\end{algorithmic}
\end{algorithm}

\begin{algorithm}[H]
    \caption{Armijo Line Search}
    \label{alg:armijo_ls}
    \begin{algorithmic}[1]
        \STATE {\bfseries Input:} direction $\mathbf{d}_t$,
        hyper-parameters $\alpha^{\text{max}}$,
        $\kappa$, $c^{\text{up}}$, $c^{\text{down}}$.
        
        \STATE Initialize $\alpha_t \leftarrow \min\left\{ \alpha^{\text{max}},\alpha_{t-1}c^{\text{up}}\right\} $ 
        
        \WHILE{$\mathcal{L}\left(\mathbf{w}_{t+1}\right) > \mathcal{L}\left(\mathbf{w}_{t}\right)+ \kappa\alpha_{t}\nabla\mathcal{L}\left(\mathbf{w}_{t}\right)^{\top}\mathbf{d}_{t}$}
        \STATE Update $\alpha_{t}\leftarrow\alpha_{t}c^{\text{down}}$ 
        
        \STATE Update $\mathbf{w}_{t+1}\leftarrow\mathbf{w}_{t}+\alpha_{t}\mathbf{d}_{t}$ 
        
        \ENDWHILE
        
        \STATE \textbf{Return} $\alpha_t$
    \end{algorithmic}
\end{algorithm}

\begin{algorithm}[H]\label{alg:adaptive_reg}
    \caption{Adaptive regularization~\cite{kiros2013training}}
    \label{alg:adaptive_lambda}
    \begin{algorithmic}[1]
        \STATE {\bfseries Input:} 
        batch $\mathcal{B}_t$, current weights $\mathbf{w}_{t}$, updated weights $\mathbf{w}_{t+1}$.
        
        \STATE Calculate $\rho$ according to~\eqref{eq:rho_lm}
        
        % \STATE Initialize $\lambda_0 \leftarrow 1$
        
        \IF{$\rho < 0.25$}
        \STATE $\lambda_{t+1} \leftarrow 1.01 \lambda_{t}$
        \ELSIF{$\rho > 0.75$}
        \STATE $\lambda_{t+1} \leftarrow 0.99 \lambda_{t}$
        \ELSE
        \STATE $\lambda_{t+1} \leftarrow \lambda_{t}$
        \ENDIF
        
        \STATE \textbf{Return} $\lambda_{t+1}$
    \end{algorithmic}
\end{algorithm}

\section{Experiment Details}\label{apx:experiments}

\subsection{Supervised Learning}\label{apx:sl}

\begin{table*}[t]

\centering
\caption{Optimal Hyper-parameters for Supervised Learning Tasks}
\label{tab:hps_sl}
\resizebox{\textwidth}{!}{%
\begin{tabular}{@{}lcccccc@{}}
\toprule
\textbf{Optimizer} & \textbf{Learning Rate} & \textbf{Regularizer} & \textbf{Momentum} & \textbf{Line Search} & \textbf{\#CG Iterations} \\ \midrule
\multicolumn{6}{c}{\textbf{California Housing}} \\
SGD       & 0.03    & -    & -    & -     & -    \\
Adam      & 0.001   & -    & -    & -     & -    \\
GAF       & 0.08    & -    & 0.9  & -     & -    \\
EGN       & 0.4     & 1.0  & 0.9  & False & -    \\
SGN       & 0.2     & 1.0  & -    & -     & 5    \\
SQN       & 0.3     & -    & 0.0  & -     & -    \\
\multicolumn{6}{c}{\textbf{Superconduct}} \\
SGD       & 0.0003  & -    & -    & -     & -    \\
Adam      & 0.01    & -    & -    & -     & -    \\
GAF       & 0.007   & -    & 0.9  & -     & -    \\
EGN       & 0.05    & 1.0  & 0.0  & False & -    \\
SGN       & 0.1     & 1.0  & -    & -     & 10   \\
SQN       & 0.07    & -    & 0.0  & -     & -    \\
\multicolumn{6}{c}{\textbf{Diamonds}} \\
SGD       & 2e-8    & -    & -    & -     & -    \\
Adam      & 0.0005  & -    & -    & -     & -    \\
GAF       & 0.001   & -    & 0.0  & -     & -    \\
EGN       & 0.0005  & 1.0  & 0.0  & False & -    \\
SGN       & 0.001   & 1.0  & -    & -     & 5    \\
SQN       & 0.004   & -    & 0.0  & -     & -    \\
\multicolumn{6}{c}{\textbf{IMDB Reviews}} \\
SGD       & 0.005   & -    & -    & -     & -    \\
Adam      & 0.01    & -    & -    & -     & -    \\
GAF       & 0.003   & -    & 0.9  & -     & -    \\
EGN       & 0.01    & 1.0  & 0.0  & False & -    \\
SGN       & 0.05    & 1.0  & -    & -     & 5    \\
SQN       & 0.02    & -    & 0.0  & -     & -    \\
\bottomrule
\end{tabular}%
}

\end{table*}

\textbf{California Housing}~\cite{ds_california_housing}, 
a part of the scikit-learn~\cite{scikit-learn} datasets package,
consists of $20640$
samples with $8$ numerical features $\mathbf{x}$ 
encoding relevant information, e.g., location, median income, etc.;
and a real-valued target $\mathbf{y}$ representing   
the median house value in California as 
recorded by the 1990 U.S. Census. 

\textbf{Superconductivity}~\cite{ds_superconductivty} is a dataset
of HuggingFace Datasets~\cite{hf-datasets} 
that contains $21263$ instances of $79$ numerical attributes 
(features $\mathbf{x}$) and critical temperatures (target $\mathbf{y}$) 
of superconductors.

\textbf{Diamonds}~\cite{ds_diamonds} is a TFDS~\cite{tfds} 
dataset containing
$53940$ instances of 
$9$ physical attributes (both numerical and categorical features $\mathbf{x}$) and 
prices (target $\mathbf{y}$) of diamonds.

\textbf{IMDB Reviews}~\cite{ds_imdb} 
is a TFDS~\cite{tfds} 
dataset that contains $25000$ training samples 
and $25000$ testing samples
of movie reviews in a text format. 
Before passing the samples to the model $\Phi$, 
we pre-process the raw text data with spaCy~\cite{honnibal2020spacy}
\textit{en\_core\_web\_lg} pipeline which converts 
a text review 
into a $300$-dimensional vector of numbers.

The optimal sets of hyper-parameters 
are presented in Table~\ref{tab:hps_sl}.

\subsection{Learning LQR Controllers}\label{apx:lqr}

\begin{table*}[t]
\centering
\caption{Optimal Hyper-parameters for BDT and UAV}
\label{tab:hps_lqr}
\resizebox{\textwidth}{!}{%
\begin{tabular}{@{}lccccc@{}}
\toprule
\textbf{Optimizer} & \textbf{Learning Rate} & \textbf{Regularizer} & \textbf{Momentum} & \textbf{Line Search} & \textbf{\#CG Iterations} \\ \midrule
\multicolumn{6}{c}{\textbf{BDT}} \\
SGD & 0.0000005 & - & - & - & - \\
Adam & 0.1 & - & - & - & - \\
EGN & 1.0 & 1.0 & 0.0 & False & - \\
SGN & 1.0 & 1.0 & - & - & 10 \\
\multicolumn{6}{c}{\textbf{UAV}} \\
SGD & 0.00008 & - & - & - & - \\
Adam & 0.02 & - & - & - & - \\
EGN & 0.2 & 1.0 & 0.0 & False & - \\
SGN & 1.0 & 1.0 & - & - & 10 \\
\bottomrule
\end{tabular}%
}
\end{table*}

\begin{table*}[t]
\centering
\caption{Optimal Hyper-parameters for Acrobot-v1 and Freeway-v1}
\label{tab:hps_dqn}
\resizebox{\textwidth}{!}{%
\begin{tabular}{@{}lccccc@{}}
\toprule
\textbf{Optimizer} & \textbf{Learning Rate} & \textbf{Regularizer} & \textbf{Momentum} & \textbf{Line Search} & \textbf{\#CG Iterations} \\ \midrule
\multicolumn{6}{c}{\textbf{Acrobot-v1}} \\
SGD & 0.001 & - & - & - & - \\
Adam & 0.0003 & - & - & - & - \\
EGN & 0.1 & 1.0 & 0.0 & False & - \\
SGN & 0.005 & 1.0 & - & - & 3 \\
\multicolumn{6}{c}{\textbf{Freeway-v1}} \\
SGD & 0.1 & - & - & - & - \\
Adam & 0.0003 & - & - & - & - \\
EGN & 0.4 & 1.0 & 0.0 & False & - \\
SGN & 0.5 & 1.0 & - & - & 5 \\
\bottomrule
\end{tabular}%
}
\end{table*}

Here, we define the problem of learning an 
LQR controller more formally.
Given a discrete time-invariant linear 
system with continuous states $\mathcal{S}\in\rr^{n_{s}}$ 
and actions $\mathcal{A}\in\rr^{n_{a}}$
of form
$T(s,a)=\mathbf{A}s+\mathbf{B}a + e$
and a reward function 
$r(s,a)=s^{\top}\mathbf{Q}s+a^{\top}\mathbf{R}a$
our task is to learn the optimal value function $v^{*}(s)$
and the optimal policy $\pi^{*}(s)$
by interacting with $T(s,a)$,
where
$\mathbf{A}\in\rr^{n_{s}\times n_{s}}$
and
$\mathbf{B}\in\rr^{n_{a}\times n_{s}}$
are system matrices,
$e\sim\mathcal{N}\left(0,\Sigma\right)$
is Gaussian noise,
$\mathbf{Q}\in\rr^{n_{s}\times n_{s}}$ 
is a negative semi-definite state reward matrix
and $\mathbf{R}\in\rr^{n_{a}\times n_{a}}$ 
is a negative definite action reward matrix.

It is well-known~\cite{hazan2022introduction}
that the optimal value  
and policy functions have the form:
\begin{align}
v^{*}(s)	&=s^{\top}\mathbf{P}s + V_0, &
\pi^{*}(s)	&=\mathbf{K} s,
\end{align}
where $\mathbf{P} \in\rr^{n_{s}\times n_{s}}$
is a negative semi-definite 
matrix,
$\mathbf{K} \in\rr^{n_{a}\times n_{s}}$ 
is a state feedback matrix,
and $V_{0}=\gamma\left(1-\gamma\right)^{-1}\text{Tr}\left(\mathbf{P}\Sigma\right)$.

To learn the optimal controller 
from data we can utilize Generalized Policy Iteration~\cite{bradtke1994adaptive, bradtke1996linear} (Algorithm~\ref{alg:lqr_gpi}).
\begin{algorithm}[t]
	\caption{Generalized Policy Iteration for LQR}
	\label{alg:lqr_gpi}
	\begin{algorithmic}[1]
		\STATE {\bfseries Input:} initial stabilizing policy $\mathbf{K}_{0}$, 
		initial weights $\mathbf{w}_0$, learning rate $\alpha$, 
		discount factor $\gamma$, tolerance $\eta=10^{-8}$, LM regularizer $\epsilon$.
		\STATE Set policy iteration counter $p=1$
		\REPEAT{}
    		% \STATE Set policy evaluation counter  $i=1$
    		% \STATE Initialize $S_1$

                \STATE Given $\mathbf{K}_{p-1}$ estimate 
                the corresponding weights $\mathbf{w}$ through a policy evaluation 
                algorithm, e.g., Algorithm~\ref{alg:lqr_pe}
      
    		\STATE Convert weights $\mathbf{w}$ 
    		to a matrix $\mathbf{M}$ 
    		\IF{$\mathbf{M}_{aa}$ is not positive-definite}
    		    \STATE \textbf{Return} Error
    		\ENDIF
    		\STATE Improve policy $\mathbf{K}_p=-\mathbf{M}_{aa}^{-1} \mathbf{M}_{as}$
    		\STATE Set $p=p+1$
		
		\UNTIL{$\left\Vert \mathbf{K}_{p}-\mathbf{K}_{p-1}\right\Vert <\eta$}
		\STATE \textbf{Return} $\mathbf{K}_{p}$
	\end{algorithmic}
\end{algorithm}

\begin{algorithm}[t]
	\caption{Policy Evaluation for LQR}
	\label{alg:lqr_pe}
	
	\begin{algorithmic}[1]
		\STATE {\bfseries Input:} policy $\mathbf{K}$, 
		initial weights $\mathbf{w}_0$, learning rate $\alpha$, 
		discount factor $\gamma$, tolerance $\eta=10^{-8}$.

            \STATE Set policy evaluation counter  $i=1$
            
            \STATE Initialize $S_1$        
            
            \REPEAT{}
        	\STATE Choose action $A_i \sim \pi(S_i)$
                by following an exploratory policy $\pi(s)=\mathbf{K}s+e$
                \STATE Execute action $A_i$ and observe $R_i, S_{i+1}$
                \STATE Obtain $A'$ by following a greedy policy $\pi(s)=\mathbf{K}s$
    
                \STATE Convert $[S_i,A_i]$ and $[S_{i+1}, A']$ to
                quadratic feature vectors $\mathbf{x}$ 
                and $\mathbf{x'}$  
                
                \STATE Calculate  $\mathbf{d}_{i}=\left(R_{i}+\mathbf{w}_{i-1}^{\top}(\gamma\mathbf{x}'-\mathbf{x})\right)\mathbf{x}$ 
    
            \STATE Update weights $\mathbf{w}_{i} \leftarrow \mathbf{w}_{i-1}+\alpha_{i} \mathbf{d}_{i}$
                
                \STATE Set $i=i+1$
    	\UNTIL{$\left\Vert \mathbf{w}_{i}-\mathbf{w}_{i-1}\right\Vert_{\infty} <\eta$}
    		
		\STATE \textbf{Return} $\mathbf{w}_i$
	\end{algorithmic}
 
\end{algorithm}

\paragraph{BDT} System matrices for the model of a binary 
distillation tower (BDT) follow~\cite{davison1990benchmark}.
The matrices represent a continuous system. 
We discretize the system using a 
Zero-Order Hold (ZOH) method with a sampling rate 
of $\Delta T=0.1s$.

\paragraph{UAV} System matrices for the linearized vertical
plane dynamics of an 
aircraft (UAV) are taken from \cite{hung1982multivariable}.
The matrices represent a continuous system. 
We discretize the system using a 
Zero-Order Hold (zoh) method with a sampling rate 
of $\Delta T=0.1s$.

The optimal sets of hyper-parameters for LQR
are presented in Table~\ref{tab:hps_lqr}.

\subsection{Reinforcement Learning with DQN}

\paragraph{Acrobot} \texttt{Acrobot-v1} is an OpenAI gym \cite{openai_gym} environment where the goal is to swing the free end of the connected joints 
above a given height in as few steps as possible. 
Transitions to any non-terminal state yield reward $R_t=-1$.

\paragraph{Freeway} \texttt{Freeway-v1}
is a part of MinAtar~\cite{young19minatar} package that 
emulates the original Atari Freeway game
which plays out on a $10 \times 10$ grid.
The goal is to reach the top of the screen
starting at the bottom of the screen maneuvering 
the obstacles appearing on the screen. A reward of $+1$
is given upon reaching the top of the screen.

The optimal sets of hyper-parameters for reinforcement learning
with DQN are presented in Table~\ref{tab:hps_dqn}.

\subsection{Limitations}\label{apx:limitations}

Table~\ref{tab:batch_times} reports per-step wall-clock times 
for different batch sizes and MLP models, 
split into direction-finding and remaining stages.

\begin{table}[t]
\centering

% \footnotesize
% \setlength{\tabcolsep}{4pt}
\caption{
Wall-clock time (ms) per step for different batch sizes and MLP models, split into direction finding stage (``Solve'') 
and remainder (``Other''). 
Means over 1000 runs (NVIDIA RTX A4000 GPU).
}
\label{tab:batch_times}
\begin{tabular}{c
  *{4}{cc}
}
\toprule
\textbf{Batch Size} 
& \multicolumn{2}{c}{\textbf{MLP 1K}} 
& \multicolumn{2}{c}{\textbf{MLP 10K}} 
& \multicolumn{2}{c}{\textbf{MLP 100K}} 
& \multicolumn{2}{c}{\textbf{MLP 1M}} \\
& Solve & Other
& Solve & Other
& Solve & Other
& Solve & Other \\
\midrule
8    & 0.135 & 0.120 & 0.177 & 0.118 & 0.176 & 0.120 & 0.492 & 0.274 \\
16   & 0.156 & 0.114 & 0.169 & 0.111 & 0.219 & 0.144 & 0.629 & 0.429 \\
32   & 0.165 & 0.112 & 0.196 & 0.113 & 0.304 & 0.183 & 0.943 & 0.725 \\
64   & 0.223 & 0.099 & 0.265 & 0.111 & 0.398 & 0.235 & 1.958 & 1.351 \\
128  & 0.326 & 0.106 & 0.381 & 0.149 & 0.734 & 0.376 & 3.489 & 2.576 \\
256  & 0.555 & 0.098 & 0.716 & 0.197 & 1.397 & 0.623 & 10.035 & 5.459 \\
512  & 1.381 & 0.140 & 1.781 & 0.258 & 4.220 & 0.861 & 38.511 & 6.239 \\
\bottomrule
\end{tabular}

\end{table}

\end{document}